%%%% kr-instructions.tex -- version 1.3 (11-Jan-2021)

\typeout{KR2022 Instructions for Authors}

% These are the instructions for authors for KR-22.

\documentclass{article}
\pdfpagewidth=8.5in
\pdfpageheight=11in

\usepackage{kr}

% Use the postscript times font!
\usepackage{times}
\usepackage{soul}
\usepackage{url}
\usepackage[hidelinks]{hyperref}
\usepackage[utf8]{inputenc}
\usepackage[small]{caption}
\usepackage{graphicx}
\usepackage{amsmath}
\usepackage{amsthm}
\usepackage{booktabs}
\usepackage[linesnumbered,ruled,vlined] {algorithm2e}
\urlstyle{same}

% the following package is optional:
%\usepackage{latexsym}

% See https://www.overleaf.com/learn/latex/theorems_and_proofs
% for a nice explanation of how to define new theorems, but keep
% in mind that the amsthm package is already included in this
% template and that you must *not* alter the styling.

\newtheorem{theorem}{Theorem}

% Following comment is from ijcai97-submit.tex:
% The preparation of these files was supported by Schlumberger Palo Alto
% Research, AT\&T Bell Laboratories, and Morgan Kaufmann Publishers.
% Shirley Jowell, of Morgan Kaufmann Publishers, and Peter F.
% Patel-Schneider, of AT\&T Bell Laboratories collaborated on their
% preparation.

% These instructions can be modified and used in other conferences as long
% as credit to the authors and supporting agencies is retained, this notice
% is not changed, and further modification or reuse is not restricted.
% Neither Shirley Jowell nor Peter F. Patel-Schneider can be listed as
% contacts for providing assistance without their prior permission.

% To use for other conferences, change references to files and the
% conference appropriate and use other authors, contacts, publishers, and
% organizations.
% Also change the deadline and address for returning papers and the length and
% page charge instructions.
% Put where the files are available in the appropriate places.
%PDF Info Is REQUIRED.
\pdfminorversion=7
\pdfinfo{
/Title (Discovering User-Interpretable Capabilities of Black-Box Planning Agents)
/Author (Pulkit Verma, Shashank Rao Marpally, Siddharth Srivastava)
/TemplateVersion (KR.2022.0)
}

\title{Discovering User-Interpretable Capabilities of Black-Box Planning Agents}
% Single author syntax
% \iffalse % (remove the multiple-author syntax below and \iffalse ... \fi here)
\author{%
    Pulkit Verma{\normalfont,} 
    Shashank Rao Marpally{\normalfont, and} 
    Siddharth Srivastava
    % Author name
    \affiliations
    % [Under Review at KR 2022] Paper ID: 197\\
    Autonomous Agents and Intelligent Robots Lab,\\
    School of Computing and Augmented Intelligence,
    Arizona State University, USA
    % Affiliation
    \emails
    \{verma.pulkit, smarpall, siddharths\}@asu.edu    % email
}

\usepackage{graphicx}
\usepackage{amssymb,amsmath,amsthm}
\usepackage{pict2e}
\usepackage{subcaption}
\usepackage{multirow}
\usepackage{tabularx}
\usepackage{makecell}
\usepackage{fancyvrb}
\usepackage{multirow}
\usepackage{booktabs}
\usepackage[svgnames,table]{xcolor}
\usepackage[most]{tcolorbox}

\usepackage{transparent}

\theoremstyle{definition}

\newtheorem{definition}{Definition}

\newcommand{\ti}[1]{\tilde{#1}}
\newcommand{\alg}{Alg.~\ref{alg:ia}}

\newcommand{\mysssection}[1]{\noindent\textbf{#1}\hspace{8pt}}
  
\usepackage{tikz}
\usetikzlibrary{decorations.pathreplacing,calc}

\newcommand{\citet}[1]{\citeauthor{#1}~(\citeyear{#1})}

\newcommand{\tuple}[1]{\langle#1 \rangle}

\begin{document}

\maketitle

\begin{abstract}
Several approaches have been developed for answering users' specific
questions about AI behavior and for assessing their core functionality
in terms of primitive executable actions. However, the problem of
summarizing an AI agent's broad capabilities for a user is comparatively new.
This paper presents an algorithm for discovering from scratch the
suite of high-level ``capabilities'' that an AI system with arbitrary
internal planning algorithms/policies can perform. It computes
conditions describing the applicability and effects of these
capabilities in user-interpretable terms. Starting from a set of
user-interpretable state properties, an AI agent, and a
simulator that the agent can interact with, our algorithm returns
a set of high-level capabilities with their
parameterized descriptions. Empirical
evaluation on several game-based scenarios shows that this approach
efficiently learns descriptions of various types of AI
agents in deterministic, fully observable settings. User studies show
that such descriptions are easier to understand and
reason with than the agent's primitive actions.
\end{abstract}

\section{Introduction}
\label{sec:introduction}

AI systems are rapidly developing to an extent where their
users may not understand what they can and cannot do safely.  
In fact, the limits and capabilities of many AI
systems are not always immediately clear even to the experts,
as they may use black box policies, e.g., ATARI game-playing 
agents~\cite{pgreydanus_2018_visualizing}, text summarization
tools~\cite{paulus2018a}, mobile
manipulators~\cite{popov2017data}, etc.

Ongoing research on the topic
focuses on the significant problem of how to answer users'
questions about the system's
behavior
while assuming that the user and AI share a common
action vocabulary~\cite{chakraborti2017plan,dhurandhar_18_explanations,Anjomshoae_2019_explainable,Barredo_2020_explainable}. Furthermore, most non-experts hesitate to
ask questions about new AI tools~\cite{Mou_2017_media} and often
do not know which questions to ask for assessing the safe limits
and capabilities of an AI system. This problem is aggravated in
situations where an AI system can carry out planning or sequential
decision making.
Lack of understanding about the limits of an imperfect
system can result in unproductive usage or, in the worst-case, serious
accidents~\cite{Randazzo_18_uber}. This, in turn, limits the adoption
and productivity of AI systems.

This work presents a new approach for 
\emph{discovering} capabilities of a black-box AI system.
The AI system
may use arbitrary internal models, representations, and processes for
computing solutions to user-assigned tasks. 
Prior work on the topic addresses complementary problems
of deriving symbolic descriptions for pre-defined
skills~\cite{konidaris_18_from} and of learning users' conceptual
vocabularies~\cite{kim_2018_interpretability,sreedharan_2022_bridging}.
However, they do not address the problem of
\emph{discovering} high-level user-interpretable capabilities that an agent
can perform using arbitrary, internal behavior synthesis algorithms
(see Sec.\!~\ref{sec:related_work} for a greater discussion).

\begin{figure*}
    \centering
    \includegraphics[width=\textwidth]{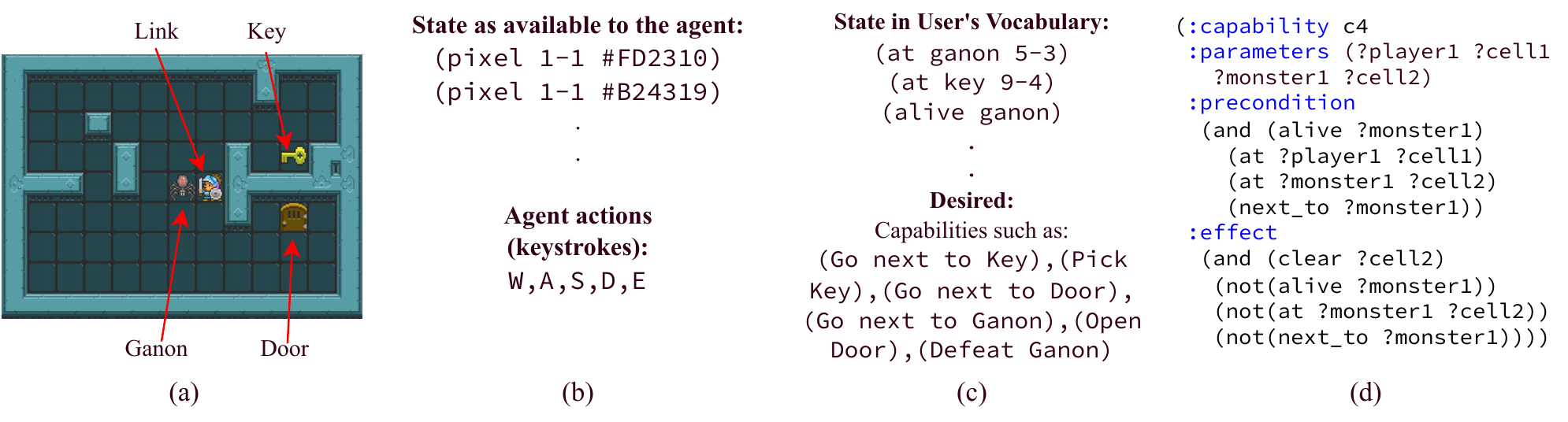}
    \caption{From pixels to interpretable capabilities. (a) A Zelda-like game; (b) States available to the agent and its actions; (c) States represented in user vocabulary, and possible set of desired capabilities; (d) A parameterized capability description learned by our method.}
    \label{fig:abstraction}
\end{figure*}

As a starting point, in this paper, we assume determinism and full
observability on part of the AI system. 
Since there are no solution approaches for solving the
problem even in this foundational setting, our framework
can serve as a foundation for solutions to the more general
setting in future. 

\subsubsection{Running example} Consider a game based
on ``The Legend of Zelda'' (Fig.\!~\ref{fig:abstraction})
featuring a protagonist player \emph{Link} who must defeat the
antagonist monster \emph{Ganon}, and escape through the door using a key. 
(Fig.\!~\ref{fig:abstraction})(a)
shows the game state as the agent sees it; its primitive actions are
keystrokes as shown in (b). These keystrokes do not help convey the
agent's capabilities because
(i) they are too fine-grained, and (ii) they show the set of
actions available to the AI system, although its true capabilities depend
on its AI planning and learning algorithms.
Fig.\,\ref{fig:abstraction}(c) shows common English terms that a
user might understand (called \emph{user's vocabulary}), and the types of capabilities that they may
want to know about. Fig.\,\ref{fig:abstraction}(d) shows a parameterized
capability discovered by our method. Intuitively, Fig.\,\ref{fig:abstraction}(d) captures
the ``defeat Ganon'' capability.

This paper shows how we can discover and describe
an agent's capabilities in the form of
Fig.\,\ref{fig:abstraction}(d). 
This capability description 
can be readily transcribed as ``If the \emph{player} is at \emph{cell1}; the \emph{monster} is at \emph{cell2}; 
the \emph{monster} is \emph{alive} (not defeated); and 
the \emph{monster} is next to the \emph{player}; 
then the \emph{player} can act to
reach a state where 
\emph{cell2} is empty;
the \emph{monster} is \emph{not alive} (defeated); 
the \emph{monster} is \emph{not at cell2}; and 
the \emph{player} is not next to the \emph{monster}.''
Our empirical evaluation shows that our system effectively discovers
such high-level capabilities; our user study shows that the discovered
capabilities help users effectively estimate black-box agent capabilities.

The rest of this paper is organized as follows. The next section
presents a formal framework for capabilities as well as notions of
correctness for discovered agent capabilities. Sec.\,\ref{sec:approach}
describes our main algorithms and their formal properties and
Sec.\,\ref{sec:experiments} presents empirical results and results from
user studies. Sec.\,\ref{sec:related_work} discusses the
relationship of the presented methods with prior work.
Finally, Sec.\,\ref{sec:conclusion} presents our conclusion and
future directions.

\section{Formal Framework}
\label{sec:background}

We model an AI system (``agent'' henceforth) as a 3-tuple $\langle
S, A, T \rangle$, where $S$ is the state space, $A$ is the set of
actions that the agent can execute, $T: S \times A \rightarrow S$ is a
deterministic black-box transition function determining the effects of
the agent's primitive actions on the environment. For brevity of notation,
we use $a(s)$ to represent $T(s,a)$, where $a \in A$, and $s \in
S$. Given a goal set $G\subseteq S$, a black-box \emph{deterministic
policy} $\Pi: S \rightarrow A$ maps each state to the action that the
agent should execute in that state to reach a $g\in G$.

In this paper, we use ``actions'' to refer to the
core \emph{functionality} of the agent, denoting the 
agent's decision choices, or primitive actions
that the agent could execute (e.g., keystrokes in our running
example). In contrast, we use the term ``capabilities'' to refer to
the \emph{high-level behaviors} that the agent can perform using its AI
algorithms for behavior synthesis, including planning and learning
(e.g., defeating Ganon or picking up the key). 
Thus, actions refer to
the set of choices that a tabular-rasa agent may possess, while
capabilities are a result of its agent
function~\cite{russell1997rationality} and can change as a result of
algorithmic updates even as the agent uses the same actions.

\subsection{Abstraction} 
\label{sec:abstraction}
We now define the notion of abstraction used in this work.
Several approaches have explored the use of abstraction in
planning~\cite{Sacerdoti_74_planning,Giunchiglia_92_theory,Helmert_07_flexible,Backstrom_13_bridging,Srivastava_16_metaphysics}.
We refer to $\ti{S}$ as the set of \emph{high-level} or
\emph{abstract} states, and $S$ as the set of
\emph{low-level} or \emph{concrete} states. We define
abstraction as in \cite{Srivastava_16_metaphysics}:

\begin{definition}
Let $S$ and $\ti{S}$ be sets such that $|\ti{S}| \leq |S|$.
An \emph{abstraction} from $S$ to $\ti{S}$ is defined by a
surjective function $f: S \rightarrow \ti{S}$. For any
$\ti{s} \in \ti{S}$, the concretization function 
$f^{-1} (\ti{s}) = \{s \in S : f(s) = \ti{s}\}$ denotes the
set of states represented by the abstract state $\ti{s}$.
\end{definition}

Following this, we use $\,\,\ti{}\,\,$ whenever we refer to a
state, a predicate, or an action pertaining to the abstract state
space. 

\subsection{Capability Descriptions}
We express
capability descriptions using a STRIPS-like
representation~\cite{Fikes1971,McDermott_1998_PDDL}.
This is because, when used with a user's vocabulary, such
a representation can be readily transcribed into statements such as ``in
situations where $X$ holds, if the agent executes actions
$a_1,\dots,a_k$ it would result in $Y$'', where $X$ and $Y$ are
in the user's vocabulary~\cite{Camacho19,verma_21_asking}.
Such representations have been shown to be intuitive for humans in 
understanding deliberative behaviors of other
agents~\cite{malle2004mind,miller_2019_explanation}.
In our running example, such a description
could indicate that if Link
is next to Ganon then Link can defeat it. 
We now formally define
a capability.

\begin{definition}
\label{def:cd}
  Given a set of objects $\ti{O}$; and a finite set
  of predicates $\ti{P} = \{\ti{p}_1^{k_1}, \ldots, \ti{p}_n^{k_n}\}$
  with arities $k_i$;
  a \emph{grounded capability} $\ti{c}^*$ is defined as a tuple
  $\langle \emph{pre}(\ti{c}^*), \emph{eff}(\ti{c}^*)\rangle$
  where precondition 
  $\emph{pre}(\ti{c}^*)$ and effect $\emph{eff}(\ti{c}^*)$ are 
  conjunctions of literals over $\ti{P}$ and $\ti{O}$.
\end{definition}

We also refer to the tuple 
$\langle \ti{c}^*, \emph{pre}(\ti{c}^*), \emph{eff}(\ti{c}^*)\rangle$
as the \emph{capability description} for a capability $\ti{c}^*$.
Here each atom could be absent, positive, or negative (henceforth
referred to as the \emph{mode}) in the
precondition and the effect of an action. However, we disallow atoms to
be positive (or negative) in both the preconditions and the effects
of an action simultaneously to avoid redundancy.
Semantics of %agent 
capabilities are close
to those of STRIPS actions, but they address vocabulary
disparity: an agent can execute a capability $\ti{c}^*$ in any
concrete state $s$ where $\ti{s}\models pre(\ti{c}^*)$; as a
result, the system reaches a concrete state $s'$ (a
member of an abstract state $\ti{s}'$). Atoms
that don't appear in $\emph{eff}(\ti{c}^*)$ retain their truth
values from $\ti{s}$ in $\ti{s}'$ while others are set to their 
modes (positive, negative, or absent) in
$\emph{eff}(\ti{c}^*)$, i.e., $\forall \ell\in
\emph{eff}(\ti{c}^*)$, $\ti{s}'\models \ell$. For brevity, we
represent this as $\ti{s}' = \ti{c}^*(\ti{s})$. 

We refer to the capabilities defined in Def.~\ref{def:cd} as  grounded
capabilities as they are instantiated with a specific set of objects in $\ti{O}$.
We use 
$\,\,^*\,\,$ whenever we refer to a
grounded capability. 
We define a lifted form of capabilities as parameterized capabilities.
\begin{definition}
  Given a set of objects $\ti{O}$; a finite set
  of predicates $\ti{P} = \{\ti{p}_1^{k_1}, \ldots, \ti{p}_n^{k_n}\}$ 
  with arities $k_i$;
  a \emph{parameterized capability} $\ti{c}$ is defined a 3-tuple
  $\langle 
  \emph{args}(\ti{c}),\emph{pre}(\ti{c}), \emph{eff}(\ti{c})\rangle$
  where 
  $\emph{args}(\ti{c})$ is the set of arguments that can be initialized with 
  a set of objects $\ti{o} \subseteq \ti{O}$;
  and $\emph{pre}(\ti{c})$ and $\emph{eff}(\ti{c})$ are sets of
  literals over $\ti{P}$ and $\emph{args}(\ti{c})$.
\end{definition}

A set of parameterized capabilities constitutes a 
parameterized capability model. 
Formally, a \emph{parameterized capability model} is 
% defined as
a tuple $\ti{M} = \tuple{\ti{P},\ti{C}}$, where $\ti{P}$ is 
a finite set of predicates, and $\ti{C}$ is a finite set of
parameterized capabilities.

Our objective is to develop a capability discovery algorithm that learns
a parameterized capability model of a black-box AI agent using as input (i)
the agent, (ii) a compatible simulator using which the agent can
simulate its primitive action sequences, and (iii) the user's concept
vocabulary, which may be insufficient to express the simulator's state
representation. Such assumptions on the agent are common. 
In fact, the use of third-party simulators for development and testing is the 
bedrock of most of the research on taskable AI systems today (including 
game-playing AI, autonomous cars, and factory robots). Providing 
simulator access for assessment is reasonable as it would allow AI 
developers to retain freedom and proprietary controls on internal 
software while supporting calls for assessment and regulation using 
approaches such as ours.

Our user studies show the efficacy of this approach using spoken English 
terms for concepts without an explicit process for vocabulary synchronization.
Several threads of ongoing research address the problem of identifying
user-specific concept 
vocabularies (e.g., ~\citet{kim_2018_interpretability}, 
\citet{sreedharan_2022_bridging}), 
and the field of intelligent tutoring systems develops methods for helping
users understand a fixed concept
vocabulary. These methods can be used to either elicit or impart a
vocabulary for a given user and such systems can be used to
complement the methods developed in this paper.

However, since the problem of capability discovery is not
well understood even in settings where user-concept
definitions are readily available, we focus on capability
discovery with a given vocabulary with known definitions and
formalize our approach using them. 
Furthermore, our
empirical evaluation and user studies don't place
requirements on user concept vocabularies and show 
the efficacy of this representation. 
We formalize these concept definitions as follows:

\begin{definition}
\label{def:abstraction}
Given a concrete state $s\in S$, 
a set of objects $\ti{O}$ and their tuples $\ti{O}^{\leq d}$
(of dimension at most $d$, where $d$ is a positive integer),
a set of \emph{concepts/predicates} $\ti{P}={\ti{p}_1^{k_1}, \ldots,
\ti{p}_n^{k_n}}$ with their arities $k_i$ and an
associated Boolean evaluation function 
$\psi_{\ti{p}_i}: S\times \ti{P}\times
\ti{O}^{\leq \emph{max}(k_i)} \rightarrow \{T, F\}$, 
$j \leq \emph{max}(k_i)$ , we define
$s\models_{\psi_{\ti{p}_i}} \ti{p_i}(\ti{o}_1, \ldots, \ti{o}_j)$ iff
$\psi_{\ti{p}_i}(s, \ti{p_i}, \ti{o}_1, \ldots, \ti{o}_j)=T$. We define 
\emph{the abstraction $\ti{s}_{\ti{P},\ti{O}}$ of a state} $s\in S$ as 
the set of all literals over $\ti{P}$ and $\ti{O}$ that are true in
$s$. $\ti{S}_{\ti{P}, \ti{O}}$ denotes the abstract state space
$\{\ti{s}_{\ti{P},\ti{O}}: s\in S\}$.
\end{definition}

We omit subscripts $\ti{P}$ and $\ti{O}$ unless needed for
clarity. As mentioned, we assume
availability of an evaluation function $\psi_{\ti{p}}$
associated with each predicate $\ti{p} \in \ti{P}$.
E.g., for a 3-D
Blocksworld simulator with objects $a$ and $b$, and coordinates $x,y,$
and $z$, ``$\emph{on}(a,b)$ is true exactly for states where $z(a)>z(b)$, $x(a)=x(b)$, and
$y(a)=y(b)$.''  As this example illustrates, such vocabularies can be
inaccurate.
The abstraction function $f$ (Def.~\ref{def:abstraction}) 
can be modeled as a conjunction
of these evaluation functions $\psi_{\ti{p}}$.
We now discuss how we discover capabilities and learn their
descriptions. 

\section{Active Capability Discovery}
\label{sec:approach}

\begin{algorithm}[t]
    % \small
    \SetAlgoNlRelativeSize{-1}
    \DontPrintSemicolon
    \SetKwInOut{Input}{Input $\,$}
    \SetKwInOut{Output}{Output$\,$}
    \caption{Capability Discovery Algorithm}
    \label{alg:ia}
    \Input{predicates $\ti{P}$, agent $\mathcal{A}$}
    \Output{$\ti{M}$}
    $E \leftarrow $ generate\_execution\_traces$(\mathcal{A})$ \;
    $\ti{C}^* \leftarrow $ generate\_partial\_capability\_descriptions$(E)$\;
    $\ti{C}' \leftarrow$ parameterize\_partial\_capabilities$(\ti{C}^*)$\;
    $\ti{M} \gets$ generate\_parameterized\_capability\_model$(\ti{C}')$\;
    Set $\ti{L} \leftarrow \{pre, \emph{eff}\}$\;
    \For{\emph{each} $\langle \ti{L},\ti{C},\ti{P}\rangle$ \emph{in} $\ti{M}$}{
        Generate $\ti{M}_+, \ti{M}_-, \ti{M}_\emptyset$ by setting $\ti{P}$ in $\ti{C}$ at $\ti{L}$ to $+,-,\emptyset$ in $\ti{M}$\;
        \For{each pair $\ti{M}_x, \ti{M}_y$ in $\{\ti{M}_+, \ti{M}_-, \ti{M}_\emptyset\}$}{
            $\ti{q} \leftarrow$ generate\_query$(\ti{M}_1, \ti{M}_2)$\;
            $\ti{\varrho} \leftarrow$ generate\_waypoints$(\ti{q})\,\,\,$ \;
            $\varrho \leftarrow$ refine\_waypoints$(\ti{\varrho}, \ti{P})$ \;  
            \For{$i$ \emph{in range}$[0, k-1]$}{
                $\theta \leftarrow$ ask\_agent$(\mathcal{A}, \langle s_{i}, s_{i+1} \rangle)$\;
                break if $\theta=\bot$\;
            }
            $\ti{M} \leftarrow$ consistent\_description$(i, \ti{s}_i, \ti{M}_x, \ti{M}_y)$\;
        }
    }
    \Return $\ti{M}$\;
\end{algorithm}

Our overall approach consists of two main phases:\linebreak
(1) discovering candidate capabilities and their partial
descriptions from a set of
execution traces of the agent's behavior (Sec.~\ref{sec:discover}); and 
(2) completing the descriptions of the candidate capabilities
discovered in step (1)
by asking the agent queries designed to assess under which conditions
it can execute those capabilities and what their effects are (Sec.~\ref{sec:learn}).
The capability discovery algorithm (\alg) performs both these
steps using user interpretable
predicates $\ti{P}$ and the agent $\mathcal{A}$ as inputs. We now explain
these two phases in detail.

\subsection{Discovering Candidate Partial Capabilities}
\label{sec:discover}

\subsubsection{Generating execution traces} As a first step, \alg\, collects a set of execution traces
$E$ from the agent (line 1). An \emph{execution trace} $e$ is a sequence of states
of the form $\langle s_0, s_1, \dots, s_{n-1}, s_n \rangle$, such that 
$\forall j\in [1,n]\quad \exists a_i \in A$  $a_j(s_{j-1}) = s_j$.
To obtain the traces $e \in E$, a set of random tasks of the 
form $\langle s_I, s_G\rangle$, where $s_I,s_G \in S$,
are given to the agent $\mathcal{A}$, and the agent is asked to 
reach $s_G$ from $s_I$.
Intermediate states that the agent goes through
form the set of execution traces $E$.

\subsubsection{Discovering candidate capabilities} 
To discover candidate capabilities, we abstract the low-level execution traces $E$ 
in terms of the user's vocabulary (line 2).
This abstraction of a low-level execution trace $\langle s_0, s_1, \dots,
s_{n-1}, s_n \rangle$ gives a high-level execution
trace $\langle \ti{s}_0, \ti{s}_1, \dots, \ti{s}_{n-1}, \ti{s}_n \rangle$.
Since we do not assume that the user's vocabulary is precise enough to
discern all the states available to the agent, more than one low-level
state in an execution trace may be abstracted to a single high-level
abstract state in $\ti{S}$. Hence for some $j \in [0,n-1]$, it is possible that
$\ti{s}_j = \ti{s}_{j+1}$. 
E.g., in Fig.\!~\ref{fig:abstraction}(a),
the state available to the agent in the simulator
expresses pixel-level details of the
game (Fig.\!~\ref{fig:abstraction}(b)), whereas the user's vocabulary
can express it only as an abstract state that represents multiple
similar low-level states (Fig.\!~\ref{fig:abstraction}(c)).
Formally, an \emph{abstract execution trace} is the longest subsequence 
of $\ti{s}_1,\dots,\ti{s}_n$ such that no two subsequent elements are identical.
We remove the repetitions from the high-level execution trace
to get the abstract execution trace $\ti{e} = \langle \ti{s}_0,\dots,\ti{s}_m\rangle$,
where $m\leq n$.

We store each transition $\ti{s}_i,\ti{s}_{i+1}$ in $\ti{e}$ as a new 
grounded candidate capability $\ti{c}^*_{\ti{s}_i,\ti{s}_{i+1}}$.

\subsubsection{Generating partial capability descriptions}
For each candidate capability $\ti{c}^*_{\ti{s}_i,\ti{s}_{i+1}}$, 
the set of predicates ${\ti{s}_{i+1} \setminus \ti{s}_{i}}$ is
added to the effects of $\ti{c}^*_{\ti{s}_i,\ti{s}_{i+1}}$ in
positive form (add effects); whereas the set
${\ti{s}_{i} \setminus \ti{s}_{i+1}}$ is
added to the same candidate capability's effects in negative form (delete effects).
As an optimization, in a manner similar to \citet{stern_2017_efficient}, 
we also store that the predicates true in 
${\ti{s}_{i}}$ cannot be negative preconditions for this
capability, whereas the predicates false in ${\ti{s}_{i}}$ 
cannot be positive preconditions.

\subsubsection{Lifting the partial capability descriptions} 
After line 2 of \alg, we get a set of candidate capabilities with their
partial descriptions that are 
in terms of predicates $\ti{P}$ instantiated with objects in $\ti{O}$.
For each such grounded partial capability description, the predicates in the
preconditions and effects are sorted in some lexicographic order. 
The choice of ordering is not important as long as it stays consistent throughout \alg.
The objects used in predicate arguments are assigned unique IDs corresponding 
to this capability in the order 
of their appearance in ordered predicates.
These IDs are then used as variables representing
capability parameters.
E.g., suppose we have a grounded partial capability description with a precondition:
$\emph{(alive ganon), (at link cell6), (at ganon cell5),(next\_to}$ 
$\emph{ganon)}$.
Traversing the predicates in this order, 
the objects used in these predicates are given IDs as follows: 
$\{\emph{ganon: 1, link: 2, cell6: 3, cell5: 4}\}$. Note that there is only
one assignment per object, hence $\emph{ganon}$ in $\emph{(at ganon cell5)}$
was not given a separate ID. This procedure is extended to effects while
assigning new IDs 
for any unseen objects in the partial capability description.
Finally, the parameterized partial capability description 
is constructed by replacing all occurrences of objects in the partial 
capability description with variables corresponding to their unique IDs.

\subsubsection{Combining candidate capabilities} Multiple candidate
partial capabilities can be combined if their precondition and effect
conjunctions are unifiable. E.g., 
for any capability to match the capability discussed above, it's precondition
should be in the form: 
\emph{(alive ?1), (at ?2 ?3), (at ?1 ?4), (next\_to ?1)}.
Its effects should also be unifiable in terms of these parameters.
The algorithm also keeps track of which grounded partial candidate capabilities
map to each parameterized partial capability description.
These descriptions are partial as they are generated
using limited execution traces and may not capture all the preconditions
or effects of a capability. 
E.g., suppose a capability adds a literal on its execution.
If that literal is already present in the state where the capability
was executed, it will not be captured in the effect of the capability's
partial description. Hence, we next try to complete the partial
capability descriptions.
Note that all parameterized partial capability descriptions are
collectively used as the parameterized capability model $\ti{M}$ (line 4).

\subsection{Completing Partial Capability Descriptions}
\label{sec:learn}

To complete the partial capability descriptions $\ti{M}$, \alg\, 
generates queries aimed to gain more information
about the conditions under which the capability
can be executed and the state properties that become 
true or false upon its execution. These queries
give the agent a sequence of states, called waypoints, to
traverse. Based on the agent's ability to traverse
them, we derive the precondition and effect
of each capability. \alg\, iterates through the
combinations of predicates and capabilities generated earlier
to determine how each predicate will appear in each
capability's precondition and effect (line 6). 
For each combination, it generates a query
as follows.

\subsubsection{Active query generation}
For each combination of predicate, capability, and precondition (or effect),
three possible capability descriptions $M_+, M_-, M_\emptyset$ are possible, 
one each for the predicate
appearing in the precondition (or effect) of the capability in positive, negative, 
or absent mode, respectively (line 7).
As noted when generating partial capability descriptions in Sec.~3.1, 
some of the models will not be considered if we know that a form is not 
possible for a particular predicate.
The algorithm iteratively picks two such models $M_x, M_y$ from 
$M_+, M_-, M_\emptyset$ (line 8) and 
generates a query $\ti{q}$ 
in the form of a state $\ti{s}_0$ and a capability sequence $\ti{\pi}$
such that the result of executing the sequence $\ti{\pi}$ on $\ti{s}_0$ 
is different in $M_x$ and $M_y$ (line 9). 
We use the agent interrogation algorithm (AIA), from our prior work 
\citet{verma_21_asking} (henceforth referred to as VMS21).
For their process, AIA reduces query generation to a planning problem. 
The resulting query $\ti{q}$ is of the form
$\langle \ti{s}_0, \ti{\pi} \rangle$, asking the model (or an agent)
about the length of the plan $\ti{\pi}$ that it can successfully
execute when starting from state $\ti{s}_0$. Here plan $\ti{\pi}$ is a sequence
of capabilities $\langle \ti{c}^*_1, \dots, \ti{c}^*_n \rangle$ grounded with objects in $\ti{O}$.

\subsubsection{Generating waypoints from queries}
The queries described above cannot be directly posed to an agent,
as the plan $\ti{\pi}$ is in terms of high-level capabilities $\ti{c}_i^* \in \ti{C}^*$, which the agent
will not be able to comprehend. Additionally, these high-level capabilities cannot be converted directly to low-level actions,
as each capability may correspond to a different sequence of low-level actions depending on the state in which it is executed.
Hence, we pose the queries to the agent in the form of high-level state transitions induced by the capabilities in the query's capability sequence.

To accomplish this, \alg\, converts the query $\ti{q}$
to a sequence of waypoints $\ti{\varrho} = \langle \ti{s}_0,\dots,\ti{s}_n\rangle$.
Starting from the initial state $\ti{s}_0$, these are generated by applying the
capability
$\ti{c}^*_i$, for $i\in [1,n]$, in the state $\ti{s}_{i-1}$ according to the partial capability description of
$\ti{c}^*_i$.
Note that the waypoints $\ti{\varrho}$ cannot be presented to the agent
as the agent may not know the high-level vocabulary. Hence these high-level
waypoints must be refined into the low-level waypoints $\varrho = \langle s_0,\dots,s_n\rangle$
(with each $s_i$ similar to state shown in Fig.~\ref{fig:abstraction}(b)) that agent understands.

\alg\, first
converts the high-level waypoints $\ti{\varrho}$ to a sequence of
low-level waypoints $\varrho = \langle s_0,\dots,s_n\rangle$
using the predicate definitions (line 11).
Then each consecutive pair of states $\langle s_{i}, s_{i+1} \rangle$
is given sequentially to the agent as a \emph{state reachability query} asking if it can reach from
state $s_i$ to $s_{i+1}$ using its internal black-box policy (line 13). 

\subsubsection{Updating partial models based on agent responses}
Using its internal planning mechanism and the simulator, the agent attempts to reach from state
$s_i$ to $s_{i+1}$. If it succeeds, the response to the query is recorded as true;
if it fails, the response is recorded as false.
The algorithm keeps track of the waypoints that were successfully traversed. 
Based on the waypoint pairs that the agent was able to traverse,
we discard the capability descriptions
among $M_x$ and $M_y$ that are not consistent with the agent's response (line 15).

E.g., suppose the algorithm is trying to determine how the predicate $\emph{(alive ?monster1)}$
should appear in the precondition of capability $c4$ shown in Fig.~\ref{fig:abstraction}(d).
Now the two possible capability descriptions $M_1$ and $M_2$ that \alg\, is considering in
line 6 are $M_+$ and $M_-$, corresponding to 
$\emph{(alive ?monster1)}$ being in $c4$'s precondition in positive and negative form, respectively.
The algorithm will generate query with its corresponding waypoints $\ti{\varrho} = \langle \ti{s}_0,\ti{s}_1\rangle$, where 
$\ti{s}_0$ will correspond to the state shown in Fig~\ref{fig:abstraction}(a), and $\ti{s}_1$ will be 
$\ti{s}_0$ without Ganon. Now the agent uses its own internal mechanism to 
try to reach 
$\ti{s}_1$ from $\ti{s}_0$ and succeeds. Since this is not possible according to $M_-$,
$M_-$ will be discarded.

We now define and prove the theoretical properties of the capability discovery algorithm.
To do this, we use two key properties of VMS21 relevant to this work: 
(1) if there exists a distinguishing query for two models then it will be generated (Thm.~1 in VMS21); and
(2) the algorithm will not discard any model that is consistent with the agent (Thm.~2 in VMS21).
Interested readers can refer to VMS21 for further details.

\subsection{Formal Analysis}
\label{sec:guarantees}

\alg\, has two main desirable properties: 
(1) the partial capability model (that is maintained as $\ti{M}$) is 
always maximally consistent, i.e, adding any more literals into it 
would be unsupported by the execution traces that we obtain; and (2) 
the final parameterized capability is complete in the limit of infinite 
execution traces given to \alg.
We first define these concepts and then formalize the results under Thm.~1 and Thm.~2.

\begin{definition}
Let $e = \langle s_0, \dots, s_n \rangle$ be an execution trace 
with a corresponding abstract execution trace 
$\ti{e} = \langle \ti{s}_0, \dots, \ti{s}_m \rangle$, where $m \leq n$.
A \emph{parameterized capability model} $\ti{M}=\langle \ti{P}, \ti{C}, \ti{O} \rangle$ is
consistent with $E$ iff 
$\forall i \in [0,m-1]$
$\exists \ti{c}^* \in \ti{C}^* \,\,\ti{s}_{i+1} = \ti{c}^*(\ti{s}_i)$, where $\ti{C}^*$ is a set of
grounded capabilities that can be generated by instantiating the parameters of
capabilities $\ti{c} \in \ti{C}$ with objects in $\ti{O}$.
\end{definition}

We extend this terminology to say that a capability model is
consistent with a set of execution traces $E$ iff it is consistent
with every trace in $E$. This notion of consistency captures completeness
as a parameterized capability model $\ti{M}$ that is
consistent with a set of execution traces $E$, is also complete w.r.t. $E$.
We next define a stronger notion of completeness that our algorithm provides
in the form of maximal consistency. 
This helps to assess the succinctness of a capability model with a set of execution 
traces $E$.

\begin{definition}
Let $E$ be a set of execution traces, and $\Lambda$ be the set of possible agents
that can generate all execution traces in $E$. 
A \emph{parameterized capability model $\ti{M}=\langle \tilde{P}, \tilde{C}, \tilde{O} \rangle$ is
maximally consistent with a set of execution traces $E$}
iff (i) $\ti{M}$ is consistent with $E$, and (ii) adding
any predicate as positive or negative precondition or effect of a capability 
in $\ti{M}$ makes it inconsistent with at least one execution trace
that can be generated by at least one agent $\mathcal{A}^\# \in \Lambda$.
\end{definition}
 
An abstraction satisfies \emph{local connectivity} iff 
$\forall \ti{s} \,\,\forall s_i, s_j \in f^{-1}(\ti{s})$ there exists 
a sequence of
primitive actions $\langle a_i,\dots,a_n \rangle$ such that
$a_n(a_{n-1}\ldots(a_1(s_i))\ldots) = s_j$.
We use this 
to show that the parameterized capability model learned by 
\alg\, is maximally consistent.

\begin{theorem}
    Let $\mathcal{A}= \langle S, A, T \rangle$ be an agent operating in a deterministic,
    fully observable, and stationary environment with a state space $S$ using a set of
    primitive actions $A$. Given an input vocabulary $\ti{P}$, 
    and the set of execution traces $E$ generated by $\mathcal{A}$, if
    local connectivity holds, then the capability model $\ti{M}$ 
    maintained by Alg. 1 is \emph{consistent} 
    with the set of execution traces $E$.
\end{theorem}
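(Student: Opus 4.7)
\begin{sproof}
The plan is to prove consistency in two stages matching the two phases of \alg: first that the initial parameterized model $\ti{M}$ produced by line 4 is consistent with $E$, and then that the refinement loop (lines 5--16) preserves this consistency. I would proceed by fixing an arbitrary abstract trace $\ti{e}=\langle \ti{s}_0,\dots,\ti{s}_m\rangle$ derived from some $e\in E$ and an arbitrary transition $\ti{s}_i \to \ti{s}_{i+1}$, then exhibit at every stage of the algorithm a grounded capability $\ti{c}^* \in \ti{C}^*$ with $\ti{c}^*(\ti{s}_i)=\ti{s}_{i+1}$.

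For the initial stage, recall that line 2 creates, for each such transition, a grounded candidate capability whose positive effects are exactly $\ti{s}_{i+1}\setminus\ti{s}_i$ and negative effects are exactly $\ti{s}_i\setminus\ti{s}_{i+1}$. By the STRIPS-like semantics (literals not appearing in $\emph{eff}(\ti{c}^*)$ retain their truth values from $\ti{s}_i$), applying this capability to $\ti{s}_i$ yields a state that agrees with $\ti{s}_{i+1}$ on every predicate: those in the symmetric difference are set explicitly, the rest are unchanged. Moreover, the partial precondition respects the restriction that no atom true in $\ti{s}_i$ can be a negative literal and no atom false in $\ti{s}_i$ can be a positive literal, so $\ti{s}_i\models \emph{pre}(\ti{c}^*)$. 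Parameterization (line 3) and combination of unifiable partial capabilities (line 4) only rename objects to variables, so every grounded instantiation of the lifted capability still yields $\ti{c}^*_{\ti{s}_i,\ti{s}_{i+1}}$, preserving consistency.

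For the refinement stage, the argument rests on Thm.~2 of VMS21: a model consistent with the agent is never discarded. Let $\ti{M}^\dagger$ denote the ``ground-truth'' capability model that matches what the agent can actually execute in each abstract state; because every trace in $E$ was realized by the agent, $\ti{M}^\dagger$ is consistent with $E$. At every iteration, when the algorithm chooses among $\ti{M}_+, \ti{M}_-, \ti{M}_\emptyset$ for a given $\langle \ti{L},\ti{C},\ti{P}\rangle$, the option whose literal placement agrees with $\ti{M}^\dagger$ is consistent with every state-reachability response given by the agent; here local connectivity is essential because it guarantees that refining an abstract waypoint $\ti{s}_i\to\ti{s}_{i+1}$ to concrete states $s_i, s_{i+1}$ does not accidentally pick representatives the agent cannot connect through primitive actions even though the capability is applicable. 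Hence the $\ti{M}^\dagger$-matching option is never eliminated, and consistency of $\ti{M}$ with $E$ is maintained.

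The main obstacle I expect is the interaction between the partial-model semantics and the iterative refinement: since an absent literal in an effect means its truth value persists, any change from absent to positive or negative alters the induced abstract transition. The careful part of the proof is to show that the only refinements surviving the agent's query responses are exactly those which preserve the observed transition $\ti{s}_i\to\ti{s}_{i+1}$. This is where local connectivity does most of the real work, by tying concrete reachability among members of a single abstract state to the well-definedness of capability application, so that the queries generated in line 10 and refined in line 11 faithfully discriminate $\ti{M}^\dagger$ from competitors without ever excluding $\ti{M}^\dagger$ itself.
\end{sproof}
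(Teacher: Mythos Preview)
Your proposal is correct and follows essentially the same two-phase structure as the paper's proof: establishing that the initial model built from $E$ is consistent, then arguing that refinements preserve consistency by appealing to Thm.~2 of VMS21. The only presentational difference is that the paper frames the second phase as a proof by contradiction with a case split (a precondition literal not true in $\ti{s}_1$, or an effect literal not matching $\ti{s}_2$, each yielding a contradiction with the implicit expressibility assumption), whereas you argue directly by positing a ground-truth model $\ti{M}^\dagger$ and observing that the $\ti{M}^\dagger$-matching option is never eliminated; these are dual arguments with the same content. As a minor note, your sketch is actually more explicit than the paper about where local connectivity is used (the paper lists it as a hypothesis but does not invoke it in the body of this proof), and you should make explicit that positing $\ti{M}^\dagger$ as a deterministic model tacitly relies on the expressibility assumption the paper invokes in its case analysis.
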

\begin{proof}
    We show
    that given the set of all execution traces $E$, the parameterized capability model 
    $\ti{M}$ maintained by \alg\, is consistent with $E$, i.e., for every high-level transition 
    $\ti{s},\ti{s}'$ corresponding to 
    a transition in $E$, there exists a capability $\ti{c}$ which has a grounding $\ti{c}^*$ such that
    $\ti{c}^*(s) = \ti{s}'$. 
    We prove this by contradiction. The partial capability model $\ti{M}$ is initially generated using observed transitions 
    $\ti{s},\ti{s}'$ corresponding to the transitions in $E$ as grounded capabilities $\ti{c}^*_{\ti{s},\ti{s}'}$
    (lines 2 to 4 in \alg). So the model $\ti{M}$ is
    consistent with the set of traces to start with. At each step, \alg\, adds a new literal $l$ to a capability $\ti{c}$ in 
    $\ti{M}$ such that adding $l$ keeps $\ti{M}$ consistent with the agent $\mathcal{A}$ (Thm. 2 from VMS21).
    Now consider that adding $l$ to $\ti{M}$ makes it inconsistent with an execution trace in $E$, i.e., there must exist a transition $\ti{s}_1,\ti{s}_2$ such that no capability $\ti{c}^* \in \ti{C}^*$ corresponds to it.
    
    Consider the version of $\ti{c}_1$ corresponding to $c^*_{\ti{s}_1, \ti{s}_2}$ that was modified by \alg. We show that modifications inconsistent with this transition are not possible under the assumption that the agent’s capabilities can be expressed using the input vocabulary.\\
    \textit{Case 1}: Suppose \alg\, added a literal $l$ in the precondition of $\ti{c}_1$ that was not true in $\ti{s}_1$. Thm.~2 in VMS21 implies that absent and negated forms of $l$ were inconsistent with executions of $\ti{c}_1$ using the same agent that generated $E$. In other words, the agent sometimes requires $l$ as a precondition to execute $\ti{c}_1$, even though $l$ was not a part of $\ti{s}_1$. This contradicts the assumption that $\ti{c}_1$ is expressible using the input vocabulary in the form of Def.~3.\\
    \textit{Case 2}: Suppose \alg\, added a literal $l$ in the effect of $\ti{c}_1$ that was not present in $\ti{s}_2$. This implies that the negation and absence of $l$ in the result of $\ti{c}_1$ were inconsistent with the agent's execution of $\ti{c}_1$ in query-responses generated by \alg. 
    A similar contradiction about the assumption of expressiveness follows. 

    Hence, the capability model $\ti{M}$ 
    maintained by Alg.~1 is \emph{consistent} 
    with the set of execution traces $E$.
\end{proof}

\begin{theorem}
    Let $\mathcal{A}= \langle S, A, T \rangle$ be an agent operating in a deterministic,
    fully observable, and stationary environment with a state-space $S$ using a set of
    primitive actions $A$. Given an input vocabulary $\ti{P}$, 
    and the set of execution traces $E$ generated by $\mathcal{A}$, if
    local connectivity holds, then the capability model $\ti{M}$ 
    returned by Alg.~1 is \emph{maximally consistent} 
    with the set of execution traces $E$.
\end{theorem}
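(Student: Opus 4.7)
The plan is to leverage Theorem 1, which already gives condition (i) of maximal consistency ($\ti{M}$ is consistent with $E$), and then to establish condition (ii) by showing that any literal currently absent from the returned $\ti{M}$ was actively ruled out by \alg\, on the basis of a response from $\mathcal{A}$ itself. Since the agent $\mathcal{A}$ belongs to $\Lambda$ (it generated every trace of $E$ by assumption), that same response supplies the witness required by the definition of maximal consistency.

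Proceeding in order, I would first invoke Theorem 1 to dispose of condition (i). For condition (ii), I would fix a triple $\langle \ti{L},\ti{c},\ti{p}\rangle$ and a mode $x \in \{+,-\}$ such that $\ti{p}$ is absent from $\ti{L}(\ti{c})$ in the returned $\ti{M}$, and assume toward contradiction that the enlarged model $\ti{M}_x$ is still consistent with every agent in $\Lambda$. Next, I would observe that the outer loop on line 6 enumerates exactly this triple, that line 7 materialises the three candidates $\ti{M}_+,\ti{M}_-,\ti{M}_\emptyset$, and that the inner loop on line 8 eventually reaches the pair $\langle \ti{M}_x, \ti{M}_\emptyset\rangle$. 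Then, by Theorem 1 of VMS21 this pair admits a distinguishing query $\ti{q} = \langle \ti{s}_0, \ti{\pi}\rangle$, which \alg\, generates on line 9, unrolls into high-level waypoints $\ti{\varrho}$ on line 10, and refines into concrete waypoints $\varrho$ on line 11 via the evaluation functions $\psi_{\ti{p}}$. Finally, by Theorem 2 of VMS21, $\ti{M}_x$ can only have been discarded on line 15 if $\mathcal{A}$'s recorded responses to the waypoint queries contradicted its predictions; this yields an execution of $\mathcal{A}$ that is inconsistent with $\ti{M}_x$, contradicting the maintained assumption because $\mathcal{A} \in \Lambda$.

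The hard part will be bridging the abstract query language and the concrete agent interface. The distinguishing query produced by AIA speaks the high-level capability vocabulary, but $\mathcal{A}$ answers only low-level state reachability questions, so I must verify that an abstract disagreement between $\ti{M}_x$ and $\ti{M}_\emptyset$ really does translate into an observable discrepancy in the agent's response. Local connectivity is the key tool here: it guarantees that whenever the capability's semantics permit a transition between two abstract states, the refined concrete waypoint pair $\langle s_i, s_{i+1}\rangle$ is actually connected by some primitive action sequence, so a failure of $\mathcal{A}$ at the low level cleanly implicates the capability's precondition or effect rather than an artifact of the refinement. A secondary subtlety I would need to address is that the loop on line 8 iterates over \emph{pairs} of candidates from $\{\ti{M}_+,\ti{M}_-,\ti{M}_\emptyset\}$, so I would check that the three pairwise comparisons, together with the update on line 15, collectively purge every inconsistent mode rather than leaving one alive by accident.
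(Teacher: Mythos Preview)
Your approach is essentially the paper's: both dispatch condition~(i) via Theorem~1 and establish condition~(ii) by contradiction, appealing to Theorems~1 and~2 of VMS21 to argue that any mode not present in the returned $\ti{M}$ must have been eliminated on the basis of a response from $\mathcal{A}$ itself, and since $\mathcal{A}\in\Lambda$ that response furnishes the required witness. Your write-up is more operational (tracing lines~6--15 explicitly) while the paper's is more terse, but the logical skeleton is the same.

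There is one case you overlook that the paper's proof does cover. You assume that for every triple $\langle \ti{L},\ti{c},\ti{p}\rangle$ with $\ti{p}$ absent in the returned $\ti{M}$, the candidate $\ti{M}_x$ is materialised on line~7 and the pair $\langle \ti{M}_x,\ti{M}_\emptyset\rangle$ is reached on line~8. But the optimisation described in Sec.~3.1 can rule out a mode \emph{before} the query loop, directly from an observed trace in $E$ (``predicates true in $\ti{s}_i$ cannot be negative preconditions,'' etc.); the text below line~7 explicitly says such models ``will not be considered.'' In that situation your line-by-line argument never fires. The paper handles this by an explicit two-case split: a literal is absent either because (1) some trace in $E$ already witnesses its inconsistency, or (2) a query response did. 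Case~(1) supplies the witness from $E$ itself without invoking VMS21; only case~(2) needs the query-loop reasoning you give. Adding this split closes the gap. Your discussion of local connectivity as the bridge between abstract queries and concrete waypoint responses is sound, though note the paper's own proof of Theorem~2 does not invoke it directly; that work is already absorbed into Theorem~1 and the VMS21 guarantees.
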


\begin{proof}
    We will prove the two conditions for maximal consistency separately. The first condition is that the model $\ti{M}$
    returned by \alg\, is consistent with $E$ follows directly from Thm.~1. Since the model maintained by \alg\, at each step is consistent with $E$, hence the same model returned after the last iteration is also consistent with $E$.
    
    Next, we show that
    adding any predicate as a positive or negative precondition or effect 
    of a capability in $\ti{M}$ returned by \alg\, makes it inconsistent with at least one execution trace 
    that can be generated by at least one agent $\mathcal{A}^\# \in \Lambda$, where 
    $\Lambda$ is the set of possible agents that can generate all execution traces in $E$. We prove this by contradiction. Note that a literal is not added by \alg\, to an action's precondition (or effect)
    only if (1) in the observed traces, it was not present in
    the state where (immediately after) that action was executed; or (2) adding it in the precondition (or effect) of an action
    resulted in a response to a query that was inconsistent with that of the agent.
    Also, note that a predicate corresponding to a literal is always added to the model in some form in each precondition (or effect).
    Suppose a literal $l$ that was not added by \alg\, is added
    to $\ti{M}$ in precondition (or effect) of a capability $\ti{c}$
    without making it inconsistent with the agent. 
    Since a predicate $p$ corresponding to this literal $l$ is already present in 
    $\ti{c}$, this implies that 
    the form of the predicate $p$ added by \alg\, is incorrect. But this is not possible
    as shown by Thm.~1 and Thm.~2 of VMS21. Hence this is not possible and adding an additional
    literal in any form to an action's precondition or effect would make it inconsistent with the agent.
    This means that it also makes the model inconsistent with at least one
    agent $\mathcal{A}^\# \in \Lambda$.
\end{proof}  

Next, we formalize the notion of downward refinability, that the
discovered capabilities are indeed within the
agent's scope. In this work, refinability is similar to the 
notion of forall-exists abstractions~\cite{Srivastava_16_metaphysics} 
for deterministic systems. 
Recall the notion of abstraction functions (Def.\,\ref{def:abstraction}).

\begin{definition}
Let $\ti{M}=\tuple{\ti{P}, \ti{C}, \ti{O}}$ be a capability model
with $\ti{S}$, the induced state space over $\ti{P}, \ti{O}$ using an abstraction function $f$,
for an
agent $\mathcal{A}=\tuple{S, A, T}$; and $\ti{C}^*$ be a set of
grounded capabilities that can be generated by instantiating the arguments of
capabilities $\ti{c} \in \ti{C}$ with objects in $\ti{O}$. 
A capability $\ti{c}^* \in \ti{C}^*$ is \emph{realizable w.r.t. $\mathcal{A}$} 
iff 
$\forall \ti{s}\in \ti{S}$, if
$\tilde{s}\models pre(\ti{c}^*)$ then $\forall s \in f^{-1}(\tilde{s})\quad  \exists
a_1, \ldots, a_n \in A: a_n(a_{n-1}\ldots(a_1(s))\ldots)\in \tilde{c}^*(\tilde{s})$.
The model $M$ is \emph{realizable w.r.t. $\mathcal{A}$} iff all capabilities $\ti{c}^*\in \ti{C}^*$ are realizable.
\end{definition}

In these terms, discovered capabilities are more likely to be useful
if they are accurate in the sense that they are consistent with
execution traces and realizable, i.e., true representations of what the
agent can do. Realizability captures the soundness of the model wrt the execution
of the capabilities.
We now show that the parameterized capability model that we learn is realizable.

\begin{theorem}
    Let $\ti{P}$ be a set of predicates $\ti{P}$, $\mathcal{A} = \langle S, A, T \rangle$ be an agent with a deterministic transition system $T$. If a high-level model is expressible deterministically using the predicates $\ti{P}$, 
    and
    local connectivity is ensured,
     then the parameterized capability model $\ti{M}$ learned by \alg\,
    is \emph{realizable}. 
\end{theorem}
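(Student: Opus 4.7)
The plan is to reduce realizability to three ingredients working in concert: (i) every grounded capability $\ti{c}^*\in\ti{C}^*$ retained in $\ti{M}$ ultimately originates from a witnessed abstract transition (either an execution trace collected on line 1 or a query response confirmed on line 13); (ii) the assumption that the high-level model is expressible deterministically in $\ti{P}$, which forces applicability and effect of $\ti{c}^*$ to be functions of the abstract state alone; and (iii) local connectivity, which allows free primitive-action routing inside any concretization class $f^{-1}(\ti{s})$.

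Concretely, I would fix an arbitrary $\ti{c}^*\in\ti{C}^*$, an abstract state $\ti{s}$ with $\ti{s}\models\text{pre}(\ti{c}^*)$, and an arbitrary $s\in f^{-1}(\ti{s})$. The goal is to exhibit $a_1,\dots,a_n\in A$ with $a_n(\cdots(a_1(s))\cdots)\in f^{-1}(\ti{c}^*(\ti{s}))$. My first step would be to use maximal consistency (Theorem 2) together with the fact that the preconditions in $\ti{M}$ were validated through queries whose refinement (lines 10--11) instantiated concrete witnesses: this yields some $\hat{s}\in f^{-1}(\ti{s})$ from which the agent's internal behavior synthesis produces a primitive action sequence $b_1,\dots,b_k$ terminating in some $\hat{s}'\in f^{-1}(\ti{c}^*(\ti{s}))$.

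Next, I would invoke deterministic expressibility to transfer this witness guarantee across the entire class $f^{-1}(\ti{s})$: if there existed some $s\in f^{-1}(\ti{s})$ for which the agent could not realize the abstract transition $\ti{s}\mapsto\ti{c}^*(\ti{s})$ (either because it was inapplicable or because its outcome abstracted differently), then the abstract transition induced on $\ti{P}$ would be non-deterministic, contradicting the hypothesis. Finally, using local connectivity, I would produce primitive actions $a_1,\dots,a_m$ routing $s$ to $\hat{s}$ within $f^{-1}(\ti{s})$, and concatenate with $b_1,\dots,b_k$ to obtain the required plan; because the route stays inside $f^{-1}(\ti{s})$, the precondition remains satisfied throughout, and the terminal state abstracts to $\ti{c}^*(\ti{s})$ as desired.

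The main obstacle I anticipate is the transfer step: queries and traces only directly certify behavior along specific low-level states produced by line 11, so Theorems 1 and 2 give consistency only with the \emph{observed} finite information, whereas realizability is a universally quantified claim over all concrete instances in $f^{-1}(\ti{s})$. The deterministic expressibility hypothesis has to bear real weight here, and I would be careful to state explicitly that, under this hypothesis, the precondition learned by \alg{} is a faithful abstract-level characterization of when the agent can realize $\ti{c}^*$ from any concrete instance of $\ti{s}$, rather than merely from the particular witnesses encountered during trace collection or querying.
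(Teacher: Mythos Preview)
Your proposal uses the same three ingredients as the paper's proof---a concrete witness from traces/queries, the deterministic-expressibility hypothesis to generalize, and local connectivity to route within a concretization class---so the overall strategy matches. The paper organizes the argument as an explicit two-case split: Case~1 treats groundings $\ti{c}^*$ that were directly witnessed (in a trace or a successful query response), where local connectivity alone suffices; Case~2 treats groundings that arise only by re-instantiating a lifted capability $\ti{c}_1$ learned from some \emph{other} witnessed grounding $\ti{c}_1^*$, and here expressibility is what carries the conclusion across groundings.

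There is one organizational slip worth tightening. In your step~3 you write that maximal consistency ``yields some $\hat{s}\in f^{-1}(\ti{s})$'' from which the agent succeeds, for the \emph{arbitrary} $\ti{s}\models\text{pre}(\ti{c}^*)$ you fixed at the outset. That witness is not in general available: the traces and queries certify behavior only at the particular abstract states (and particular groundings) that were actually exercised, so for a fresh $\ti{s}$ or a fresh grounding there may be no $\hat{s}\in f^{-1}(\ti{s})$ on record. Your step~4 and your ``obstacle'' paragraph show you understand that expressibility must bridge this gap, but as written step~3 claims too much and step~4 then only transfers \emph{within} $f^{-1}(\ti{s})$ rather than \emph{across} abstract states and groundings. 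The cleanest fix is exactly the paper's case split: either the pair $(\ti{c}^*,\ti{s})$ was witnessed (and then local connectivity plus the recorded plan finishes), or it was not (and then you must argue, via expressibility and the exhaustive predicate-by-predicate refinement in \alg, that the learned precondition/effect of the lifted $\ti{c}$ correctly characterizes realizability for \emph{every} grounding and \emph{every} $\ti{s}$ satisfying it, after which local connectivity again finishes). Once you make that split explicit, your argument coincides with the paper's.
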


\begin{proof}
    We will prove that for all capabilities in $\ti{C}$ learned as part
    of the parameterized capability model $\ti{M}$, for all groundings $\ti{C}^*$,
    if the capability is executed in an abstract state $\ti{s}$ such that $\tilde{s}\models pre(\ti{c}^*)$ then there exists a sequence of low-level states that the agent can traverse to reach a state $\ti{s}' \in \ti{c}^*(\ti{s})$.
    
    We prove this by cases.
    Consider a capability $\ti{c} \in \ti{C}$ whose description is learned using 
    \alg\,. Using Thm.~1, the precondition and effect of $\ti{c}$ will be consistent 
    with $E$ generated by the agent. Now consider a grounded capability $\ti{c}^*$ 
    corresponding to the capability $\ti{c}$.
    There are only two cases possible: (1) either $\ti{c}^*$ appeared in the observed
    traces or was executed successfully by the agent in response to one of the queries posed to the agent; or (2) it was not present in either. We prove each case separately.\\
    \textit{Case 1:} There exists a set of low-level states $s$ and $s'$ such that
    $\ti{c}^*(\ti{s})=\ti{s}'$, where $\ti{s}=f(s)$ and $\ti{s}'=f(s')$. Now due to local connectivity, all states in $f^{-1}(s)$ are connected with each other and same is true for all states in $f^{-1}(s')$. Hence the agent can traverse from any state in $f^{-1}(s)$ to any state in $f^{-1}(s')$ on executing the capability $\ti{c}^*$. This makes the capability $\ti{c}^*$ realizable. \\
    \textit{Case 2:}
    Since $\ti{c}^*$ was not observed directly and the only way capabilities are added to $\ti{M}$ is if they are lifted forms of capabilities identified from observation traces $E$, $\ti{c}^*$ must be a grounding of the lifted form $\ti{c}_1$ of a capability $\ti{c}_1^*$ that is of the type considered in case 1. \alg\, constructs precondition and effect of $\ti{c}_1$ while ensuring consistency with query responses and observations under the assumption that the capability model is expressible as in Def.~3. When this assumption holds, the effect or precondition of a capability can only depend on the vocabulary of available predicates, which are considered exhaustively (hierarchically) by \alg. This implies that there must be a path from a concrete state $s$ in the grounding corresponding to $\ti{c}_1$\!'s precondition to a concrete state $s'$ that satisfies the effects of grounding of $\ti{c}_1$\!'s effects. By local connectivity, this extends to all concrete states in the same abstract state as $\ti{s}$ corresponding to~$s$. 
    
    Hence  if a high-level model is expressible deterministically using the predicates $\ti{P}$, 
    and
    local connectivity is ensured,
     then the parameterized capability model $\ti{M}$ learned by \alg\,
    is \emph{realizable}. 
\end{proof}

Note that here expressibility of a high-level model refers to the class of models 
of the form defined in Def.~3.
Together, the notions of maximal consistency and realizability establish the completeness and soundness of our approach wrt a set of execution traces $E$. Note that this approach will also work when we have access to a stream of execution traces $E$ being collected at random, independent of our active querying mechanism. 
We next show that 
in the limit of infinite randomly generated execution traces, our approach will capture all possible agent capabilities with probability 1.
Here,
capturing all possible agent capabilities in a learned model 
$\ti{M} = \langle \ti{P}, \ti{C}, \ti{O} \rangle$ 
means that if
the agent can go from $\ti{s}_i$ to $\ti{s}_j$, then one of the capabilities 
in $\ti{C}$ will be instantiable to result in 
$\ti{s}_j$ when executed from $\ti{s}_i$.

\begin{theorem}
    Let $\ti{P}$ be a set of predicates, $\mathcal{A} = \langle S, A, T \rangle$ be an agent with a deterministic transition system $T$.
    Suppose random samples of agent behavior in the form of execution traces $E$ are coming from a distribution that assigns non-zero probability to at least one transition corresponding to each ground capability ($\ti{c}^*_{\ti{s}_i,\ti{s}_j}$, $\ti{s}_i,\ti{s}_j \subseteq \ti{P}$). If a high-level model is expressible deterministically using the predicates $\ti{P}$ and
    local connectivity holds, then in the limit of infinite execution traces $E$, the probability of discovering all capabilities $\ti{c}\in \ti{C}$ expressible using the predicates $\ti{P}$ is 1.
\end{theorem}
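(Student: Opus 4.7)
The plan is a two-step argument: (i) a probabilistic step showing that every ground capability of the agent is eventually observed in $E$ with probability one, and (ii) a constructive step showing that observing the ground capabilities suffices to discover every parameterized capability $\ti{c}\in \ti{C}$ that is expressible in $\ti{P}$, by feeding them through the lifting and active-querying machinery already justified by Thms.~1--3.

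For step (i), I would first note that since $\ti{P}$ and $\ti{O}$ are finite, the abstract state space $\ti{S}_{\ti{P},\ti{O}}$ is finite, so the set of ground capabilities of the form $\ti{c}^*_{\ti{s}_i,\ti{s}_j}$ is finite. Fix any such ground capability the agent can execute. By hypothesis, the sampling distribution assigns probability at least some $\alpha>0$ to a trace containing a transition corresponding to $\ti{c}^*_{\ti{s}_i,\ti{s}_j}$. Because trace samples are independent, the probability that $\ti{c}^*_{\ti{s}_i,\ti{s}_j}$ fails to appear in the first $N$ trace samples is at most $(1-\alpha)^N$, which tends to $0$ as $N\to\infty$; equivalently, a direct Borel--Cantelli argument gives that $\ti{c}^*_{\ti{s}_i,\ti{s}_j}$ is observed infinitely often with probability one. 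A union bound over the finite set of ground capabilities then yields that, with probability one, every executable ground capability appears in $E$ in the limit.

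For step (ii), I would invoke the lifting construction from Sec.~\ref{sec:discover}: every observed abstract transition $\langle \ti{s}_i,\ti{s}_{i+1}\rangle$ creates a grounded partial capability $\ti{c}^*_{\ti{s}_i,\ti{s}_{i+1}}$, which is then parameterized and unified with other compatible grounded capabilities into a lifted form $\ti{c}$ (line 3 of \alg). Since each $\ti{c}\in \ti{C}$ expressible using $\ti{P}$ has at least one ground instantiation and that instantiation is observed w.p.~$1$ by step~(i), the parameterized capability model $\ti{M}$ constructed in line~4 of \alg\, contains every such $\ti{c}$. The subsequent active-query loop (lines 5--16) completes each partial description, and its correctness is already established: by Thms.~1--2 the resulting $\ti{M}$ is consistent and maximally consistent with $E$, and by Thm.~3 it is realizable w.r.t.~$\mathcal{A}$ under the expressibility and local-connectivity assumptions. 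Combining these yields that every capability $\ti{c}\in \ti{C}$ expressible in $\ti{P}$ is discovered with its correct precondition and effect.

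The main obstacle I anticipate is formally bridging the probabilistic event in step~(i) with the deterministic post-processing of step~(ii): I must argue that a single observed grounding of a lifted capability is sufficient to trigger the construction of its lifted form via the lexicographic-unification procedure in Sec.~\ref{sec:discover}, and that the query-generation loop terminates with the maximally consistent description regardless of which particular grounding happened to be the first one observed. Both facts follow essentially by the deterministic bookkeeping of \alg\, once a witnessing ground transition is present, but the combination of a probability-one event on $E$ with the deterministic learning stage needs to be stated carefully to conclude that the overall event ``all expressible capabilities are discovered'' has probability one.
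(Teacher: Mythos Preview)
Your proposal is correct and follows essentially the same two-step approach as the paper: first argue that finiteness of the abstract transition set plus the non-zero probability hypothesis forces every ground capability to appear in $E$ with probability one, then invoke the prior theorems to conclude that each observed ground capability is lifted and learned. Your version is more rigorous than the paper's (explicit $(1-\alpha)^N$ bound and union bound rather than the paper's informal ``probability of not observing this finite set will reduce''), and you invoke Thms.~1--3 where the paper cites only Thm.~1, but the skeleton is the same.
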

\begin{proof}
    Consider every possible abstract transition that the agent can make.
    There are finite (let's consider $L$) such transitions possible given the predicate vocabulary $\ti{P}$ and a fixed set of objects $\ti{O}$. 
    Now we are getting random  
    execution traces $E$ from a distribution that assigns non-zero probability to at least one transition corresponding to each ground capability ($\ti{c}^*_{\ti{s}_i, \ti{s}_j}$).
    This means that the probability of not observing this finite set of cardinality $L$ will reduce with each successive collection of $L$ execution traces. Hence we will eventually observe at least one transition corresponding to each ground capability ($\ti{c}^*_{\ti{s}_i,\ti{s}_j}$).
    Then as shown in Thm.~1, we will discover the capability $\ti{c}$ corresponding to the ground transition $\ti{c}^*_{\ti{s}_i,\ti{s}_j}$ with probability 1.
\end{proof}

\section{Empirical Evaluation}
\label{sec:experiments}
    
We implemented \alg\, in Python to empirically verify its 
effectiveness.~\!\!\footnote{Code: https://github.com/AAIR-lab/capability-discovery}
To show that our approach can work with different internal agent 
implementations, we evaluated Alg.\,\ref{alg:ia} with two 
broad categories of input test agents:
\emph{Policy agents} can use (possibly learned) 
black-box policies to plan and to respond to 
state reachability queries.
We used policy agents with hand-coded policies for this evaluation.
\emph{Search agents} respond to the state reachability queries using arbitrary 
search algorithms. We used 
search agents
that use A$^*$ search~\cite{hart1968formal}.
We now describe the setup of our experiments used for evaluation.
\begin{figure}[t]
  \centering
  \begin{subfigure}[t]{.48\columnwidth}
    \centering
    \includegraphics[width=\linewidth]{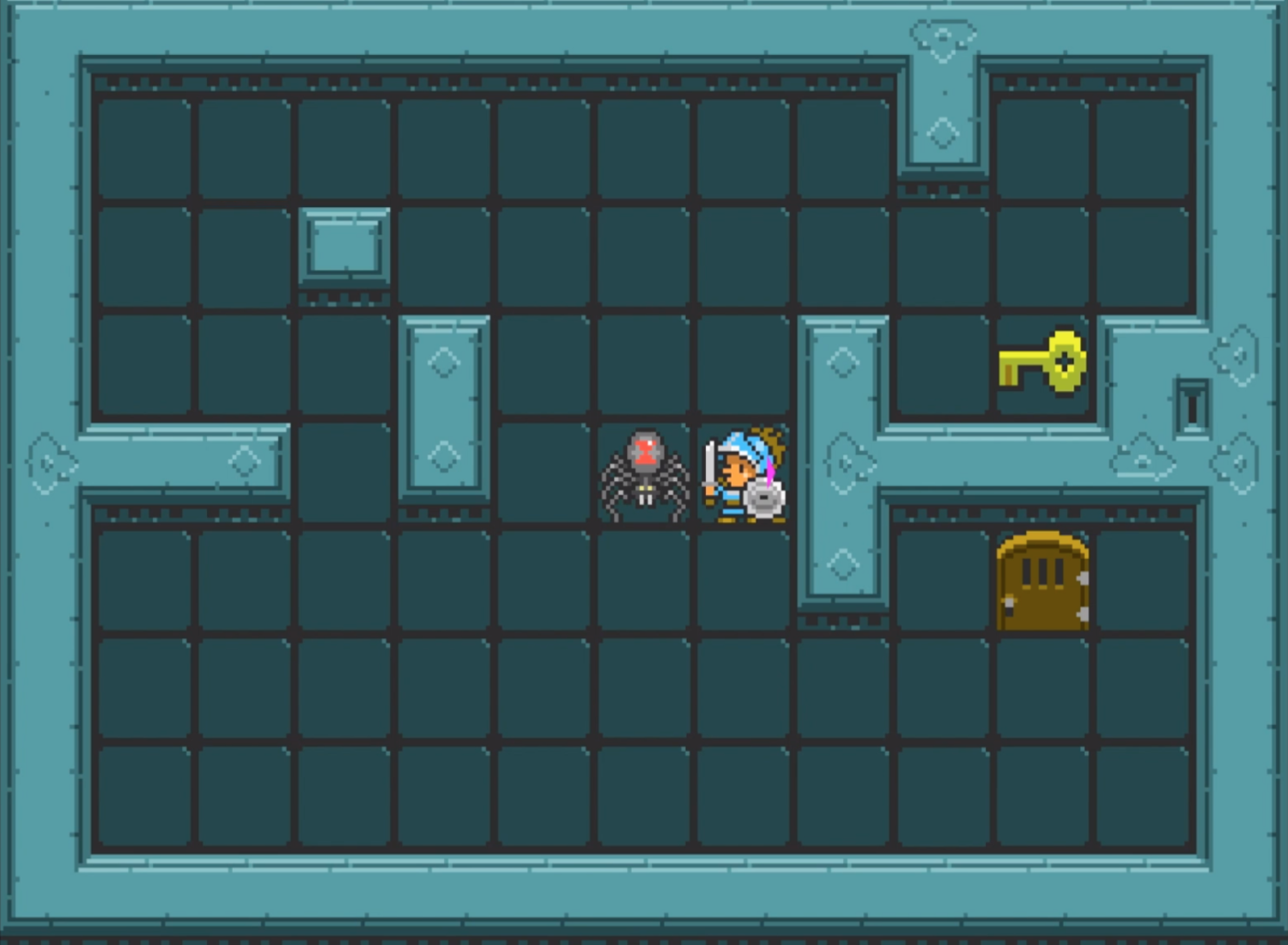}
    \caption{}
    \label{fig:gvgai_zelda}
  \end{subfigure}
  \,
  \begin{subfigure}[t]{.48\columnwidth}
    \centering
    \includegraphics[width=\linewidth]{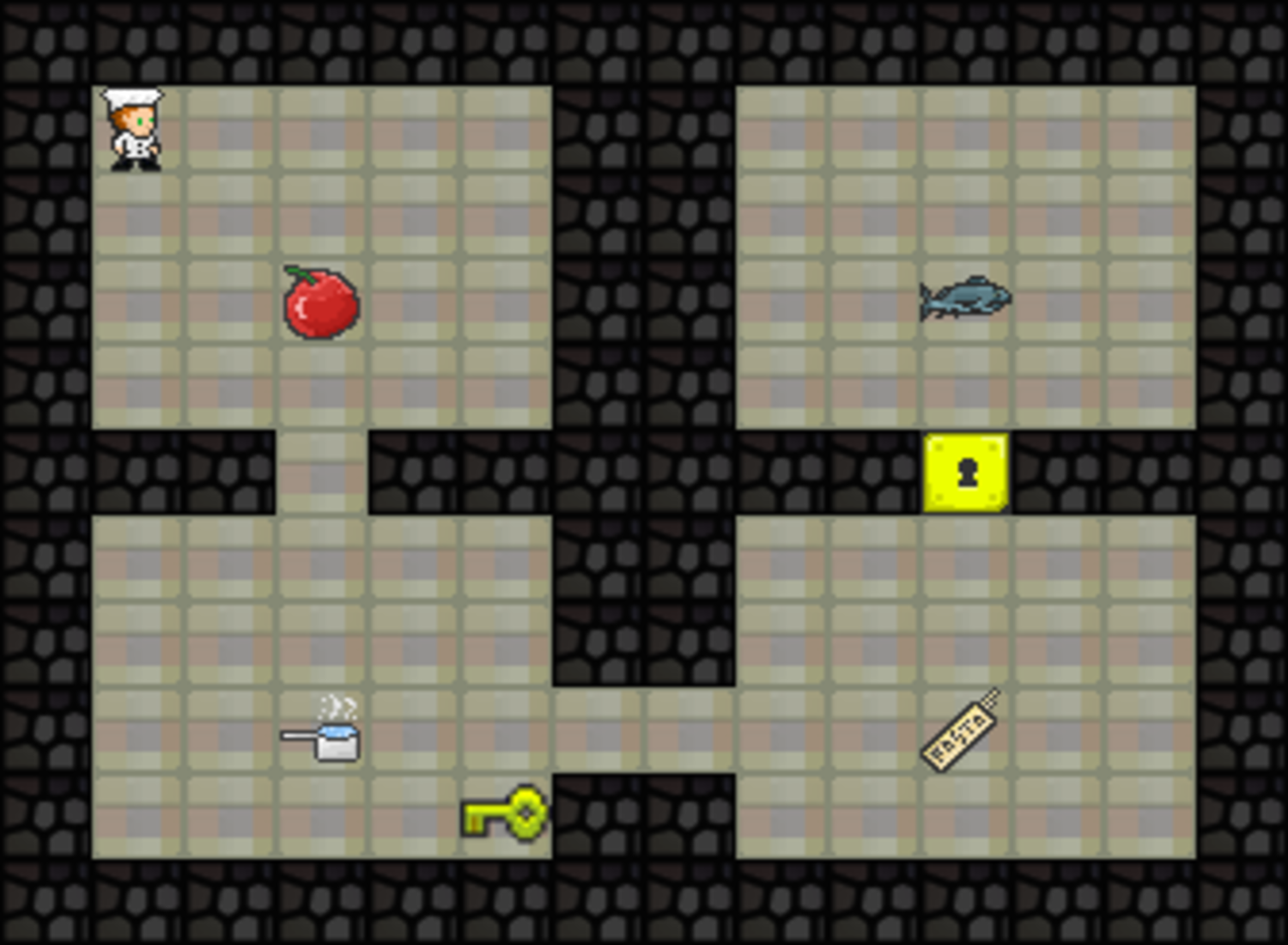}
    \caption{}
    \label{fig:gvgai_pasta}
  \end{subfigure}\\
  \,\\ 
  \begin{subfigure}[t]{.48\columnwidth}
    \centering
    \includegraphics[width=\linewidth]{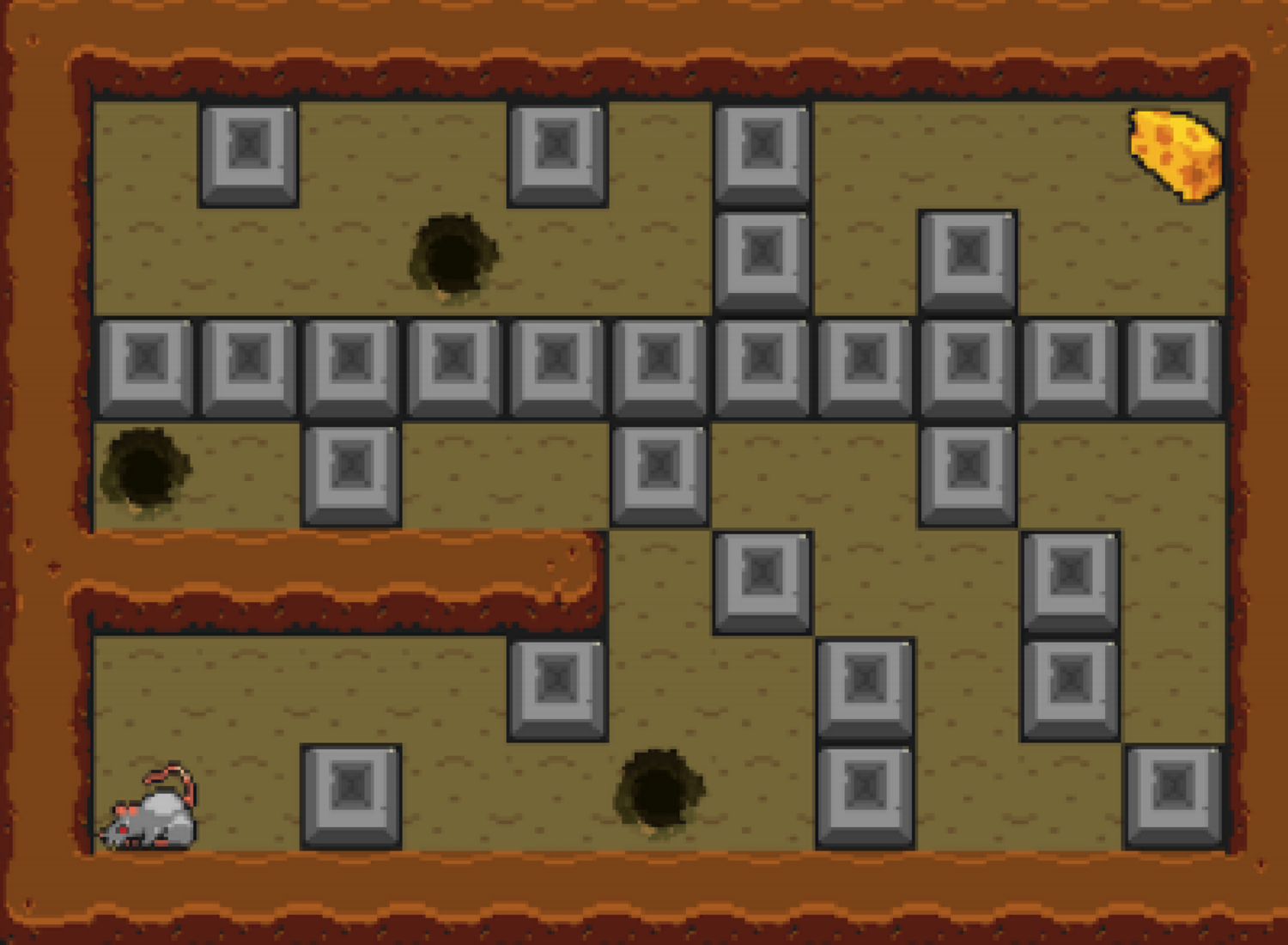}
    \caption{}
    \label{fig:gvgai_escape}
  \end{subfigure}
  \,
  \begin{subfigure}[t]{.48\columnwidth}
    \centering
    \includegraphics[width=\linewidth]{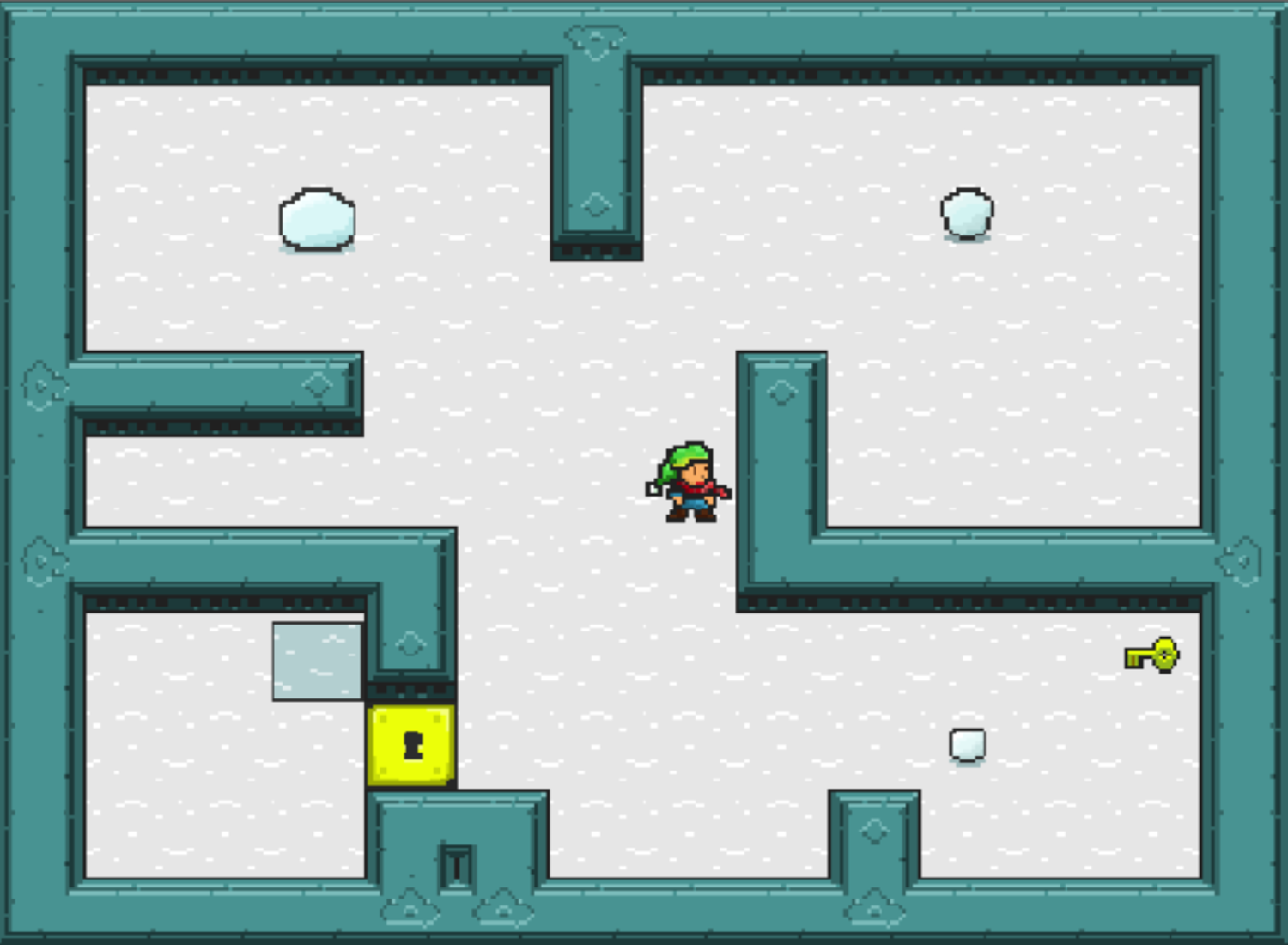}
    \caption{}
    \label{fig:gvgai_snowman}
  \end{subfigure}
  \caption{GVGAI's domains; (a) Zelda, (b) Cook-Me-Pasta, (c) Escape, and (d) Snowman.}
  \label{fig:gvgai}
\end{figure}

\subsection{Experimental Setup}
\label{sec:experiment_setup}
    
Our test agents use 
the General Video Game Artificial
Intelligence framework~\cite{Perez-Liebana_16_gvgai,Perez-Liebana_19_gvgai}.
Domains in GVGAI are 
two-dimensional ATARI-like games defined using
the Video Game Description Language PyVGDL~\cite{schaul_2013_video}.
We performed experiments on four such game domains --
Zelda,
Cook-Me-Pasta, Escape, and Snowman (Fig.~\ref{fig:gvgai}). All these domains require the agent to
navigate in a grid-based environment and complete a set of tasks (in some partial order) to complete the game. More details about these domains and the input user 
vocabularies are available in Appendix A.
Since the complete list of an agent's capabilities may be irrelevant to a 
user's current needs,    
w.l.o.g, our implementation supports an input including sets 
of formulas representing the properties that may be of interest to the user. This set can be the set of all grounded predicates in the user's concept vocabulary. 
We also consider object types to be a subset of the unary predicates in the vocabulary and assume that each object has exactly one type. These types are used and discovered in capability like any other predicate. In addition, they are used in creating parameterized capability parameters as shown in Fig.~\ref{fig:abstraction}(d).

For each domain, and for each grid size in that domain, we create a random game instance with the goal of achieving one of the user's specified properties of interest.
To generate these instances, the number of obstacles in all domains, except Escape, is set to 20\% of the total cells in the grid, whereas all other objects are generated randomly. 
We use the solution to that instance to generate the execution trace that is used in lines 1-2 of Alg.\,\ref{alg:ia}. These solutions are not always optimal. 
All experiments are run on 5.0 GHz Intel i9 CPUs with 64 GB RAM running Ubuntu 18.04.

As shown in Sec.\!~\ref{sec:guarantees}, \alg\, is guaranteed to
compute capability descriptions that are correct in the sense that they
are consistent with the execution traces, and refinable and executable
with respect to the true capabilities of the agent.
We now present the main conclusions of our empirical analysis.

We evaluated our algorithm's performance along two aspects; (i) how the performance
of our approach changes with respect to the size of the problem; and (ii) how its performance
differs for search-based vs policy-based agents.

    \begin{figure}[t]
      \centering
        \includegraphics[width=\linewidth]{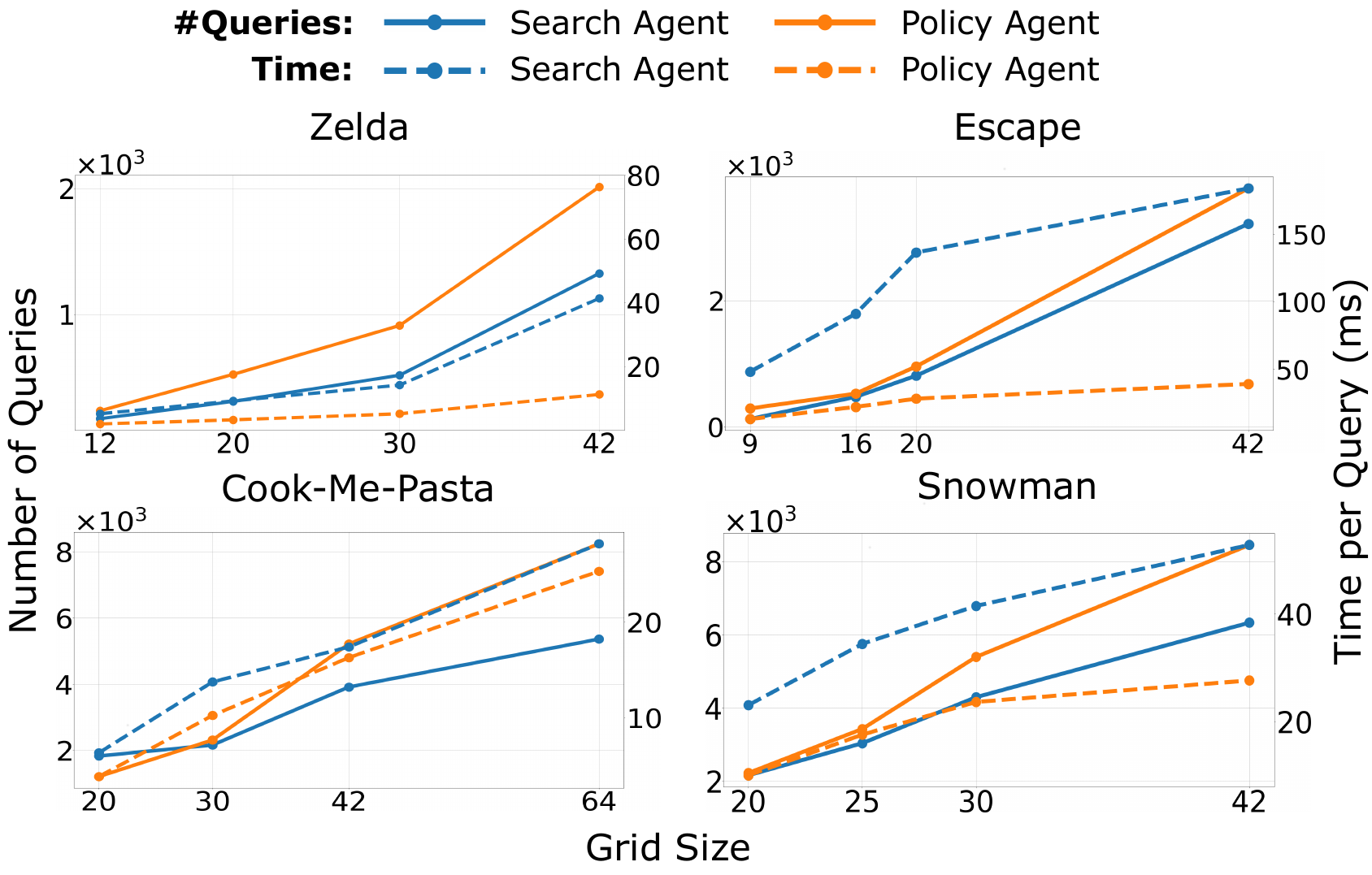}
        \caption{Performance comparison of search-based agents and policy-based agents in terms of the number of queries asked and time taken per query when increasing the grid size (number of cells in the grid) in the four GVGAI domains.}
        \label{fig:results}
    \end{figure}

\subsection{Empirical Results}
\label{sec:results}

\subsubsection{Scalability analysis} 
We increase the size of each domain to analyze its effect on
the performance of the search and policy agents. Fig.\!~\ref{fig:results} shows the graphs for the experimental runs on the four domains. In all four domains, for both kinds of agents, \emph{the number of queries increases as we increase the grid size}. 
The increasing number of queries is an expected behavior 
and this is also clear in approaches that use passive observations of agent behavior~\cite{Yang2007,aineto2019learning}.  

\subsubsection{Agent type analysis} 
The \emph{number of queries required by the policy agent is higher than that of the search agent} in almost all cases. This is because a large number of state reachability queries fail on the policy agent as the sequence of waypoints in these queries does not always align with the policy of the agent. 
However, \emph{the time per query is lesser for the policy agents} as they can answer the 
state reachability 
queries by following their policy, whereas the search agents perform an exhaustive search of the state space for every such query.

\subsection{User Study}
\label{sec:user_study}

We conducted a user study to evaluate 
the utility of  the  capability descriptions discovered and computed by Alg.\,\ref{alg:ia}.
Intuitively, our notion of interpretability matches that of common English and its use in AI literature, e.g., as enunciated by
\citet{Doshi-Velez2018}: \emph{``the ability to explain or to present in understandable terms to a human.''} We evaluate
  this through the following operational hypothesis:

\noindent
\textbf{H1.}
The discovered capabilities 
make it easier for users to 
analyze and predict the outcome of the agent’s possible behaviors.

We designed the following study to evaluate H1. 

\begin{figure}
    \centering
    \includegraphics[width=\columnwidth]{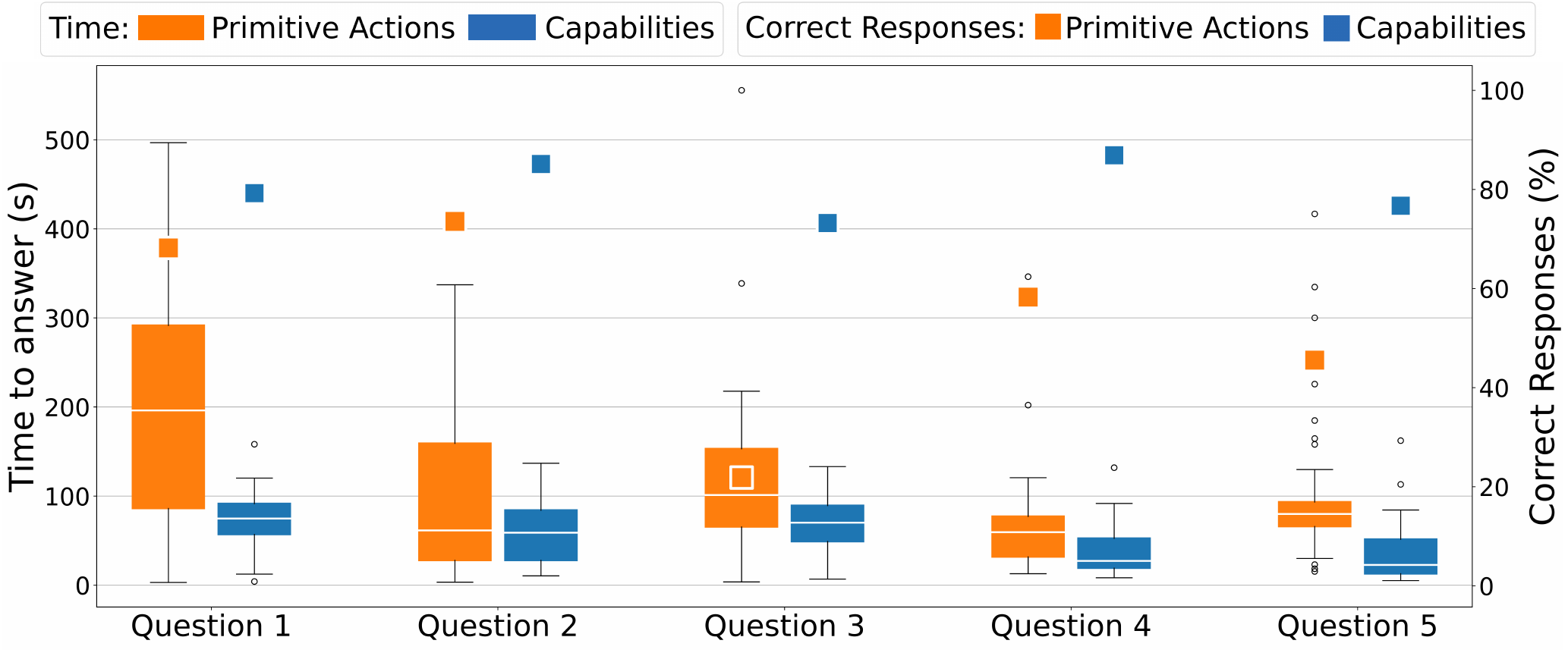}
    \caption{ Data from behavior analysis shows that using computed capability descriptions took lesser time and yielded more accurate results. See Sec.\!~\ref{sec:user_study} for details.}
    \label{fig:behavior}
\end{figure}

\subsubsection{Behavior analysis study} This study compares the predictability and analyzability of
agent behavior 
% by the users 
in terms of the agent's low-level actions and high-level capabilities. Each user is explained the rules of a Zelda-like game.
% described earlier.
One group of users -- called the \textit{primitive action group} -- are presented with text descriptions of
the agent's primitive actions, while the users in the other group -- called the \textit{capability group} -- are 
presented with a text description of the six learned capabilities. 
The capability group users are
asked to choose a short summarization for each capability description, out of the eight possible
summarizations that we provide, whereas the primitive action group users are asked to choose a short summarization for each primitive action description, out of the five possible
summarizations that we provide.
Then each user is given the same 5 questions in order. Each question contains two game-state images; start and end state. The user is asked what sequence of actions or capabilities
that
the agent should execute to reach the end state from the start state. Each question has 5 possible options
for the user to choose from, and these options differ depending on their group. We then collect the data about the accuracy of the answers, and the time
taken to answer each question.

\subsubsection{Study design} 108 participants
were recruited from 
Amazon Mechanical Turk
and randomly divided into two groups 
of 54 each.
Each user was provided with a survey on 
Qualtrics~\cite{qualtrics_2022}
that explained the rules of GVGAI's Zelda game.
We used
screeners~\cite{kennedy_2020_shape,arndt2021collecting} to ensure quality of the data collected, and discarded 23 responses. The results are based on the responses of 41 and 43 users in
the primitive action and capability group, respectively.

\subsubsection{Results} 
The results of the behavior analysis study  are shown in (Fig.\!~\ref{fig:behavior}) 
To evaluate the statistical significance\linebreak
(p-value) of the difference in the mean of the
time taken by the two groups, we used Student's t-test~\cite{student1908probable}.
The results
indicate that the test results were statistically significant 
with p-values less than 0.05 for all five questions.
Also,
the users took less time to answer questions and they got more responses correct
when using the capabilities as compared to using primitive actions.
This validates H1 that the discovered capabilities made it easier for the users to analyze and predict the agent's behavior correctly.
Detailed  information about the user study 
is available in Appendix B.

{
\section{Related Work}
\label{sec:related_work}
}

\subsubsection{High-level skills from input options}
Given a set of options encoding skills as input, \citet{konidaris_18_from}
and \citet{james_20_learning} propose methods for learning 
high-level propositional models of options representing various ``skills.'' 
They
assume access to predefined options and learn the high-level symbols
that describe those options at the high-level. 
While they use options or skills as inputs to learn models defining when those skills will be useful in terms of auto-generated symbols (for which explanatory semantics could be derived in a post-hoc fashion), our approach uses user-provided interpretable concepts as apriori inputs to learn agent capabilities: high-level actions as well as their interpretable descriptions in terms of the input vocabulary.

\subsubsection{Learning symbolic models using physics simulators}
Multiple approaches learn different kinds of symbolic models of the functionality of
ATARI or physics-based simulators using methods like conjunctions of binary input
features~\cite{kansky_17_schema}, graph neural networks~\cite{battaglia_16_interaction,crarmer_20_discovering},
CNNs~\cite{agrawal_16_learning,fragkiadaki_16_learning}, etc.
Some methods create interpretable
descriptions of reinforcement learning policies using trees~\cite{liu_2018_toward}
or specialized programming languages~\cite{verma_2018_programmatically}.
These approaches solve the orthogonal problem of learning the functionality of an agent 
that could help a user understand how an agent would solve a problem, whereas we focus
on learning capabilities of the agent that could help a user understand and answer what type
of problems it could solve.

\subsubsection{Action model learning} 
The planning community has also worked on learning STRIPS-like action models of
agent functionality from observations of its
behavior~\cite{gil_94_learning,Yang2007,Cresswell09,Zhuo13action,stern_2017_efficient,aineto2019learning,bonet_2020_learning}.
\citet{jiminez_2012_review}
and \citet{arora_2018_review} present a comprehensive review of such approaches.
These methods work with broad assumptions that
the agent model is internally expressed in the same vocabulary as
the user's~\cite{gil_94_learning,weber_11_goal,juba_2021_safe}, or at a similar
level of abstraction~\cite{mehta_11_autonomous,verma_21_asking,nayyar2022differential}. 
Additionally, such methods have as input a given set of predicates 
in terms of which they learn the functionality descriptions of the agent.

\subsubsection{High-level actions}
Works like \citet{Madumal_2020_explainable} explain 
an agent's policy in terms of high-level actions but 
they assume that high-level actions are a part of the input 
whereas our approach discovers these actions.
There is an orthogonal thread of research on using high-level actions
in AI planning as tasks, and learning low-level policies
for each of those 
tasks~\cite{yang2018peorl,lyu2019sdrl,illanes2020symbolic,kokel2021reprel}. 
These works assume the high-level actions as input and learn
the corresponding low-level policies.

As compared to the above two classes of methods, our work
focuses on solving the harder problem of discovering the capabilities of the agent behavior resulting
from its planning/learning algorithms
and learning the descriptions of these capabilities. 

\section{Conclusion}
\label{sec:conclusion}

We presented a novel approach for learning the capability description
of an AI system in terms of user-interpretable concepts by 
combining information from passive
execution traces and active query answering. 
Our approach works for settings where 
the user's conceptual vocabulary is imprecise and
cannot directly express the agent's capabilities.
Our empirical analysis showed that for the agents that internally use black-box
deterministic policies, or search techniques, we can
successfully discover the capabilities and their descriptions.
Extending this approach for partially observable settings and 
relaxing the various assumptions we made are some of the promising future directions
for this work.

\section*{Acknowledgements} \label{sec:ack}
We thank Nancy Cooke, Akkamahadevi Hanni, and Sydney Wallace
for their help with the user study.
We also thank anonymous reviewers for 
their helpful feedback on the paper.
This work was supported in part by the NSF under grants IIS 1942856,
IIS 1909370, and the ONR grant N00014-21-1-2045.

%% The file named.bst is a bibliography style file for BibTeX 0.99c
\bibliographystyle{kr}
\bibliography{capability}

\cleardoublepage
\appendix
\setcounter{theorem}{0}

\section{Domains and their Semantics}
\label{sec:domains_app}

This section describes the four GVGAI game domains used in this work and the semantics of the user interpretable predicates in these domains. 
Note that information like the orientation of the agent (player) in each of these domains is not captured
by any of the predicates. This information is important for low-level policies as certain actions can 
only be executed in certain orientations.

\subsection{Zelda}
The Zelda-like domain, as shown in Fig.\!~\ref{fig:abstraction}a, consists of a key, a door that opens using that key, the antagonist player \textit{Link}, and the protagonist monster \textit{Ganon}. To win the game, Link must defeat Ganon, and then should use the key to open the door to escape. Link can move one cell at a time in the direction it is facing. If Link moves into the cell adjacent to the key, Link picks up the key by executing the keystroke \texttt{E} (special keystroke). The same keystroke is used to Defeat Ganon when Link is facing Ganon and is in a cell adjacent to Ganon, and to escape when Link is in a cell adjacent to the door and facing it.
The user vocabulary for this domain is shown in Tab.~\ref{tab:zelda}.
\begin{table}[h]
    \centering
    \small
    \rowcolors{2}{gray!13}{}
    \begin{tabular}{p{0.28\columnwidth}p{0.62\columnwidth}}
    \toprule
    \textbf{Predicate} & \textbf{Meaning}\\
    \midrule
    
    \texttt{at(?ob ?loc)} & True if an object \texttt{?ob} is at location \texttt{?loc}\\
    \texttt{wall(?loc)} & True if there is a wall at location \texttt{?loc}\\
    \texttt{clear(?loc)} & True if location \texttt{?loc} is empty, i.e., it has no object, wall, or player\\
    \texttt{has\_key()} & True if Link has the key. \\
    \texttt{escaped()} & True if Link has escaped (game is over).\\
    \texttt{alive(?m)} & True if Ganon is still alive\\
    \texttt{next\_to(?ob2)} & True if Link is in a cell adjacent to ganon, door, or key.\\
    \bottomrule
    \end{tabular}
    \caption{Predicates in the user vocabulary for Zelda}
    \label{tab:zelda}
    \end{table}

\subsection{Cook-Me-Pasta}
The Cook-Me-Pasta domain, as shown in Fig.\!~\ref{fig:gvgai_pasta}, consists of raw pasta, sauce, boiling water, tuna (fish), lock, and key. The objective is to cook tuna pasta using a three-step process. First, the pasta is cooked by adding boiling water to the raw pasta, this can be done by pressing \texttt{E} while holding both the ingredients. Similarly, tuna is cooked by mixing sauce and tuna. Finally, the cooked pasta and the cooked tuna are to be mixed together. One or more of the ingredients can be locked in a room which must be opened using a key.
The user vocabulary for this domain is shown in Tab.~\ref{tab:pasta}.
\begin{table}[h]
    \centering
    \small
    
     \rowcolors{2}{gray!13}{}
     \begin{tabular}{p{0.28\columnwidth}p{0.62\columnwidth}}
    \toprule
    \textbf{Predicate} & \textbf{Meaning}\\
    \midrule
    
    \texttt{at(?ob ?loc)} & True if an object \texttt{?ob} is at location \texttt{?loc}\\
    \texttt{wall(?loc)} & True if there is a wall at location \texttt{?loc}\\
    \texttt{clear(?loc)} & True if location \texttt{?loc} is empty, i.e., it has no object, wall, or player\\
    \texttt{has\_key()} & True if the player has the key \\
    \texttt{pasta\_cooked()} & True if the pasta is cooked\\
    \texttt{is\_door(?loc)} & True if the location \texttt{?loc} has a door\\
    \bottomrule
    \end{tabular}
    \caption{Predicates in the user vocabulary for Cook-Me-Pasta.}
    \label{tab:pasta}
    \end{table}

\subsection{Escape}
The Escape domain, as shown in Fig.\!~\ref{fig:gvgai_escape}, consists of movable blocks, fixed holes, and cheese. The blocks can be pushed into the holes to clear out a path. The game is finished when the player reaches the location with cheese.
The user vocabulary for this domain is shown in Tab.~\ref{tab:escape}.

\begin{table}[t]
    \centering
    \small
     \rowcolors{2}{gray!13}{}
     \begin{tabular}{p{0.28\columnwidth}p{0.62\columnwidth}}
    \toprule
    \textbf{Predicate} & \textbf{Meaning}\\
    \midrule
    
    \texttt{at(?ob ?loc)} & True if an object \texttt{?ob} is at location \texttt{?loc}\\
    \texttt{wall(?loc)} & True if there is a wall at location \texttt{?loc}\\
    \texttt{clear(?loc)} & True if location \texttt{?loc} is empty, i.e., it has no object, wall, or player\\
    \texttt{is\_hole(?loc)} & True if the location \texttt{?loc} has a hole \\
    \texttt{is\_goal(?loc)} & True if the location \texttt{?loc} is the goal location \\
    \texttt{is\_block(?loc)} & True if the location \texttt{?loc} has a movable block\\
    \bottomrule
    \end{tabular}
    \caption{Predicates in the user vocabulary for Escape.}
    \label{tab:escape}
\end{table}

\subsection{Snowman}
The Snowman domain, as shown in Fig.\!~\ref{fig:gvgai_snowman}, consists of three pieces of a snowman: the top, middle, and bottom piece; a key that can be used to unlock a door (like other domains), and the goal cell. The objective of the game is to assemble the snowman in the goal location in order, constrained by the player being able to hold only one piece at any given time.
The user vocabulary for this domain is shown in Tab.~\ref{tab:snowman}.
\begin{table}[h]
    \centering
    \small
    
     \rowcolors{2}{gray!13}{}
     \begin{tabular}{p{0.28\columnwidth}p{0.62\columnwidth}}
    \toprule
    \textbf{Predicate} & \textbf{Meaning}\\
    \midrule
    
    \texttt{at(?ob ?loc)} & True if an object \texttt{?ob} is at location \texttt{?loc}\\
    \texttt{wall(?loc)} & True if there is a wall at location \texttt{?loc}\\
    \texttt{clear(?loc)} & True if location \texttt{?loc} is empty, i.e., it has no object, wall, or player\\
    \texttt{has\_key()} & True if the player has the key \\
    \texttt{player\_has(?o)} & True if the player has object \texttt{?o} \\
    \texttt{is\_goal(?loc)} & True if the location \texttt{?loc} is the goal location \\
    \texttt{placed(?part)} & True if part \texttt{?part} is placed at the goal location.\\
    \texttt{is\_door(?loc)} & True if the location \texttt{?loc} has a door\\
    \bottomrule
    \end{tabular}
    \caption{Predicates in the user vocabulary for Snowman.}
    \label{tab:snowman}
\end{table}

\section{User Study Details}
In this section, we describe the details of the user study survey that was given to the
study participants. The participants were split into two groups. The capability group and the primitive action group. 

\mysssection{Game description} The participants in both groups were shown the description of the game. As shown in Fig.~\ref{fig:rules}, this description lists out the rules of the game.

\begin{figure}[h]
    \centering
    \includegraphics[width=\columnwidth]{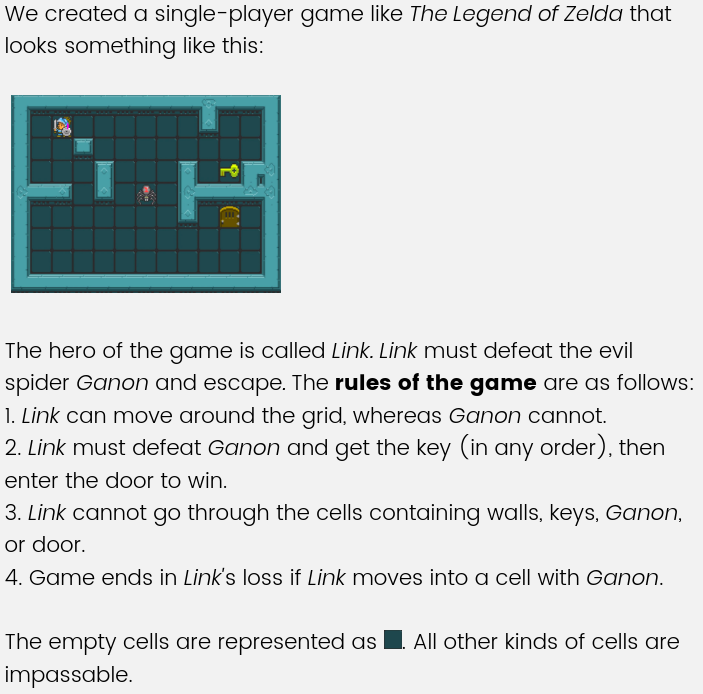}
    \caption{Game description shown to the study participants}
    \label{fig:rules}
\end{figure}

\mysssection{Capability descriptions} The participants are then shown the next part based on which group they fall in. The participants in the capability group are shown description of 6 parameterized actions,
each generated using boilerplate templates for each predicate. We show here (Fig.~\ref{fig:capabilities}) the description of the capability $c4$ whose learned description was shown in Fig.1(d) in the main paper. The participants are also given an option to choose between eight possible descriptions of from which they choose the correct summarization of that capability. This is illustrated in Fig.~\ref{fig:capabilities}. 

\begin{figure}[!h]
    \centering
    \includegraphics[width=\columnwidth]{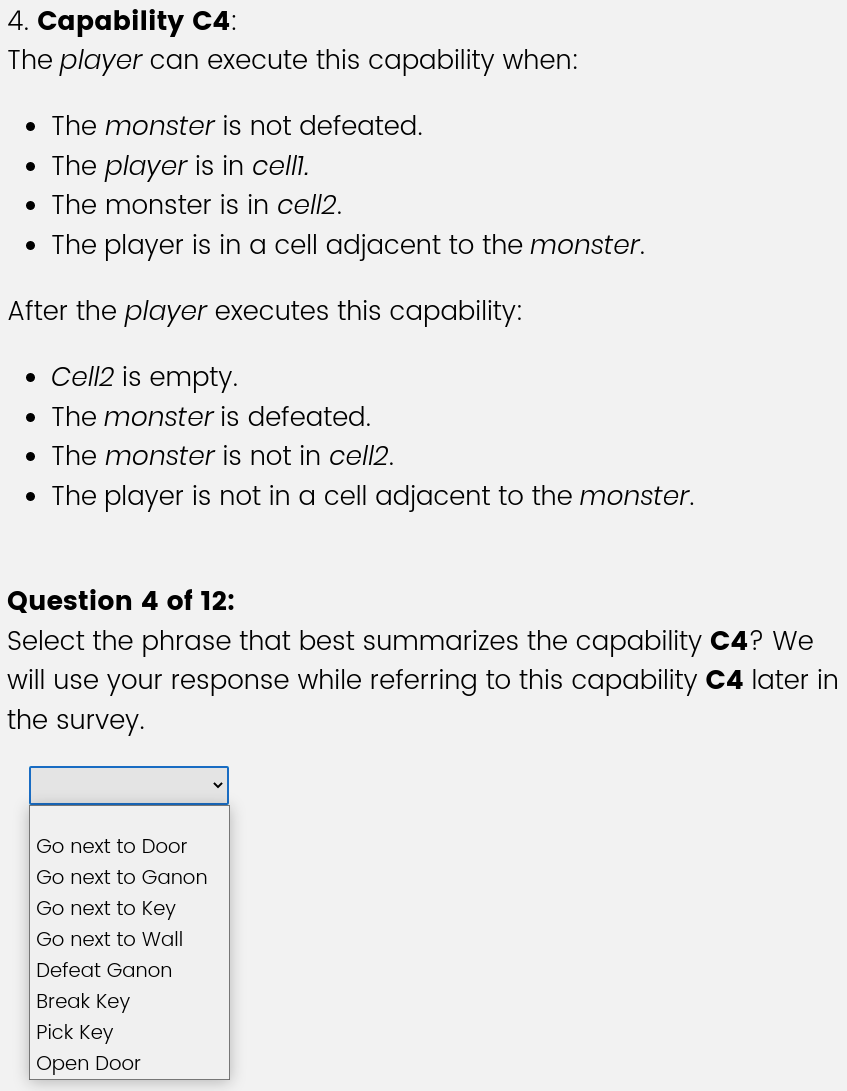}
    \caption{Description of the capability C4 with summarization options.}
    \label{fig:capabilities}
\end{figure}

\mysssection{Action descriptions} Similar to the capability group, the participants in the primitive action group are shown textual descriptions of the keystrokes, with five options to choose from. Each option provides a possible description of the keyboard in English. Fig.~\ref{fig:actions} shows the description of keystroke $W$ with the five options.

Notice that we tried to keep the same format for the description of actions as that of capabilities, i.e., of the form ``if $\langle x\rangle$ conditions hold then $\langle y\rangle$ happens.'' Also, the description of capabilities are parameterized by the player, monster, cells, etc. whereas the description in primitive actions use the object names like Link, Gannon, etc. directly. 

\begin{figure}[!h]
    \centering
    \includegraphics[width=\columnwidth]{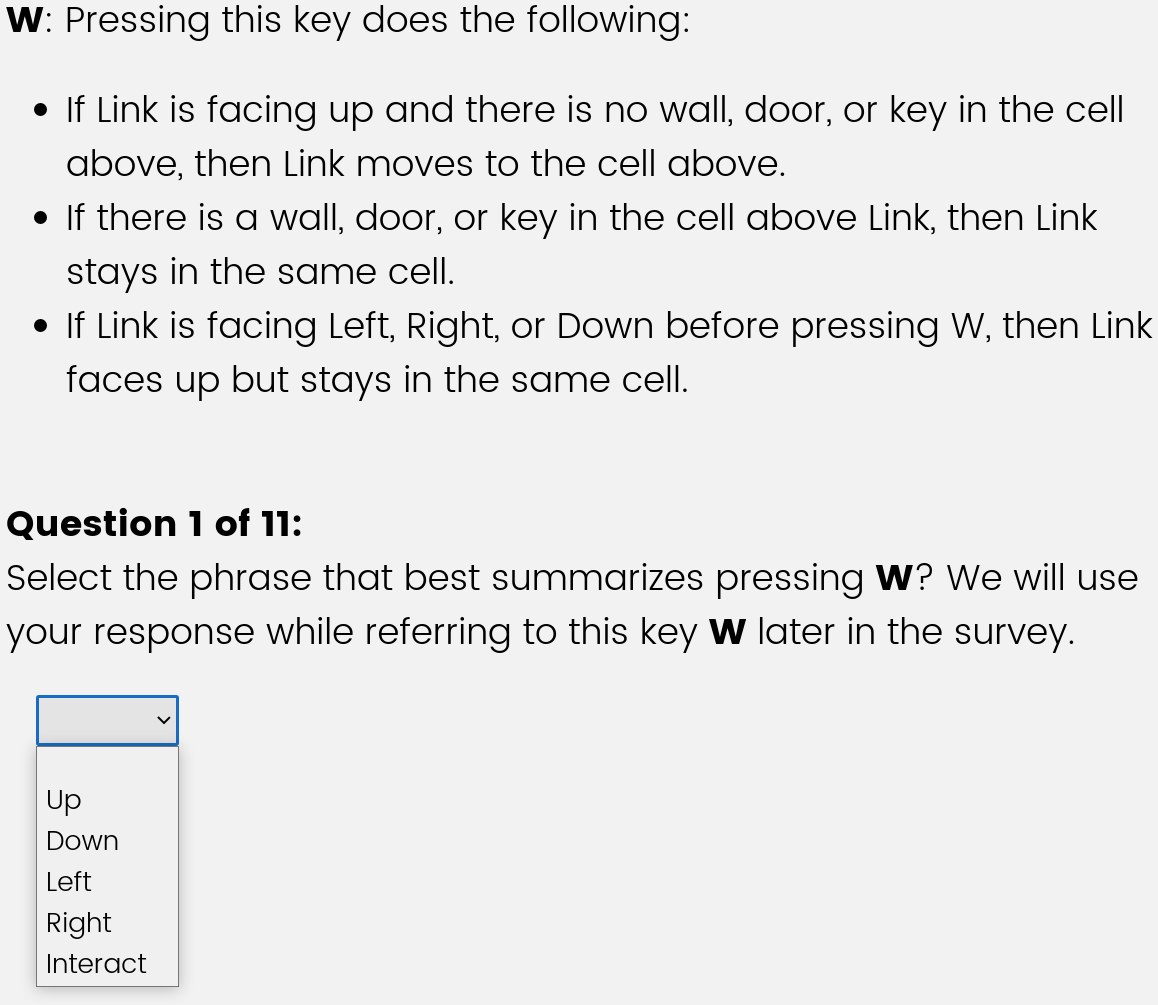}
    \caption{Description of the keystroke $W$ with summary options}
    \label{fig:actions}
\end{figure}

\mysssection{Questions} After showing the capability and action descriptions, the participants of both the groups are shown the same questions. These questions give two-game images and ask the participant the sequence of capabilities or actions (depending on the user's group) that the agent should execute to reach the goal state from the initial state. One such question is shown in Fig.~\ref{fig:sample_question}. There were six such questions in total shown.in total to all the participants. \\
\emph{Sanity Question:} One of these six was a sanity check question. The answer was given in the question itself. The responses for any participant who got this question wrong were discarded.

\begin{figure}[h]
    \centering
    \includegraphics[width=\columnwidth]{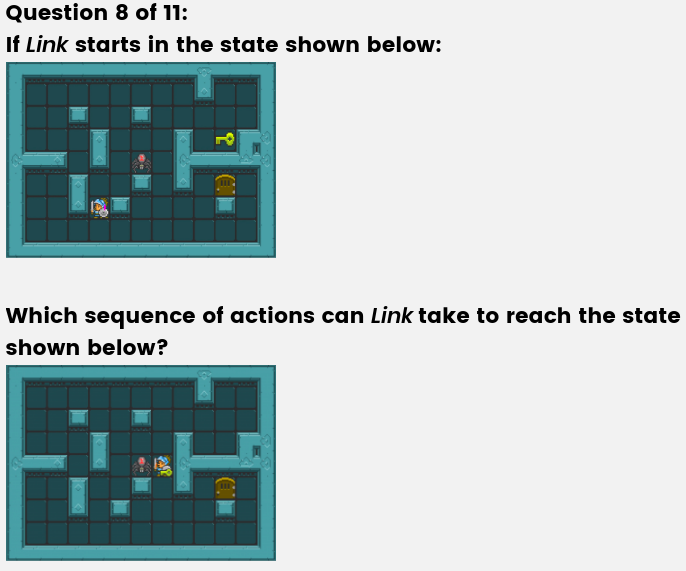}
    \caption{A sample user study question}
    \label{fig:sample_question}
\end{figure}

\mysssection{Options} The options given to the two sets of users for the same question
differed because the capability group participants were given options in terms of capability sequence that the agent can execute (shown in Fig.~\ref{fig:cap_options}), whereas the primitive action group participants were given options in terms of sequences of primitive actions (shown in Fig.~\ref{fig:act_options}). Note that these options refer to the question shown in Fig.~\ref{fig:sample_question}. 

\begin{figure}[h]
    \centering
    \includegraphics[width=\columnwidth]{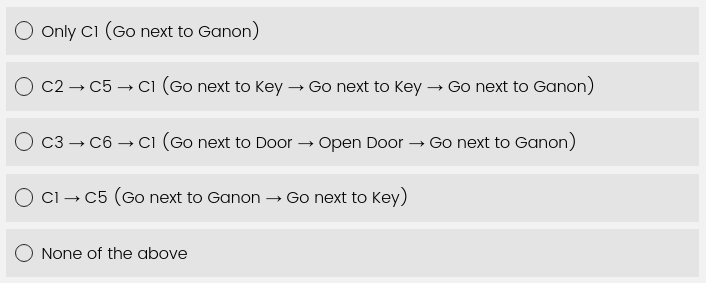}
    \caption{Options for question in Fig.~\ref{fig:sample_question} given to capability group participants}
    \label{fig:cap_options}
    
\end{figure}

\begin{figure}[!h]
    \centering
    \includegraphics[width=\columnwidth]{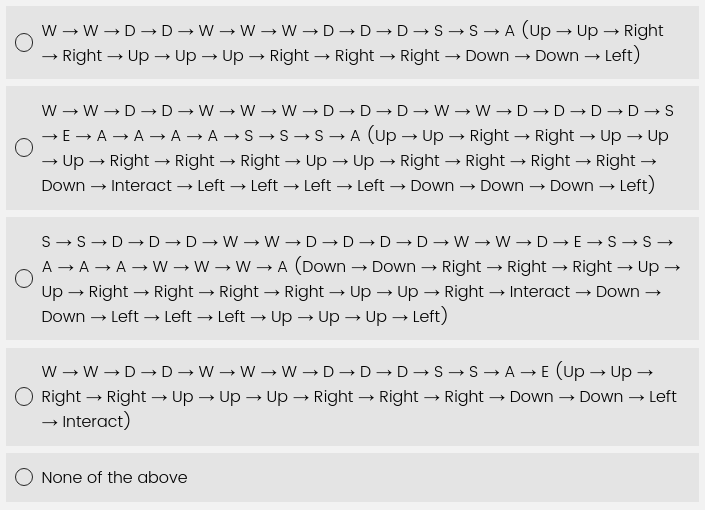}
    \caption{Options for question in Fig.~\ref{fig:sample_question} given to primitive action group participants}
    \label{fig:act_options}
\end{figure}

\section{Secondary User Study}
We also investigated another hypothesis assessing whether the users were able to 
understand the descriptions by assessing whether they can effectively summarize the capabilities. We formalize the hypothesis as:

\noindent
\textbf{H2.}
The user can effectively summarize the learned capability descriptions.

We performed the following study to evaluate the hypothesis:

\mysssection{Capability summarization study} This study evaluates the interpretability of the discovered 
capability descriptions. 
The user is explained the rules of the Zelda-like game described earlier (shown in Fig.~\ref{fig:rules}), and then  presented with a text description of the six learned capabilities. Finally, as shown in Fig.~\ref{fig:capabilities}, the user is
asked to choose a short summarization for each description, out of the eight possible
summarizations that we provide. 

\mysssection{Results} There were a total of 54 participants in the capability group out of whom 43 got the sanity check question right. The results of the capability summarization study (Tab.\!~\ref{tab:summary}) for these 43 participants
demonstrate that the users are able to summarize the descriptions almost uniformly accurately except for C3 and C5. This verifies H2 that the users can effectively summarize the learned capability descriptions.

\begin{table}
\rowcolors{2}{gray!10}{}
\centering
\footnotesize
\begin{tabular}{lcccccccc}
\toprule  
& S1 & S2 & S3 & S4 & S5 & S6 & S7 & S8 \\ 
 \hline
C1 & \cellcolor{green}1.0 & 0 & 0 & 0 & 0 & 0 & 0 & 0 \\
C2 & 0 & \cellcolor{green}1.0 & 0 & 0 & 0 & 0 & 0 & 0 \\
C3 & 0 & 0 & \cellcolor{green}0.91 & 0 & 0 & 0 & \cellcolor{orange}0.09 & 0\\
C4 & 0 & 0 & 0 & \cellcolor{green}1.0 & 0 & 0 & 0 & 0 \\
C5 & 0 & 0 & 0 & 0 & \cellcolor{green}0.84 & 0 & 0 & \cellcolor{orange}0.16\\
C6 & 0 & 0 & 0 & 0 & 0 & \cellcolor{green}1.00 & 0 & 0\\
\hline
\end{tabular}
\caption{ {Accuracy of capability summarization study for the Zelda-like game. An element in 
row Ci and column Sj represents the fraction of instances when capability Ci was summarized as 
Sj by the study participants. Correct summarization of Ci is Si (in green). C1,S1: \emph{Go next to Ganon}; C2,S2: \emph{Go next to Key}; C3,S3: 
\emph{Go next to Door}; C4,S4: \emph{Defeat Ganon}; C5,S5: \emph{Pick Key}; C6,S6: \emph{Open Door}; 
S7: \emph{Go next to Wall}; S8: \emph{Break Key}.}}
\label{tab:summary}
\end{table}

\end{document}